\documentclass{article}

\usepackage{amsmath,amsfonts,bm}

\def\eqref#1{(\ref{#1})}

\def\1{\bm{1}}

\def\mI{{\bm{I}}}

\DeclareMathAlphabet{\mathsfit}{\encodingdefault}{\sfdefault}{m}{sl}
\SetMathAlphabet{\mathsfit}{bold}{\encodingdefault}{\sfdefault}{bx}{n}

\newcommand*\lrb[1]{{\left[#1\right]}}

\newcommand*\lrp[1]{{\left(#1\right)}}

\newcommand*\ts[1]{s^{(#1)}}

\newcommand*\ty[1]{y^{(#1)}}

\newcommand*\tb[1]{b^{(#1)}}
\newcommand*\tv[1]{v^{(#1)}}
\newcommand*\tr[1]{r^{(#1)}}
\def\Attn{{\texttt{Attn}}}
\def\Re{{\mathbb{R}}}
\def\pad{{\texttt{pad}}}
\def\TF{{\texttt{TF}}}
\newcommand\numberthis{\addtocounter{equation}{1}\tag{\theequation}}
\usepackage[normalem]{ulem}
\usepackage{microtype}
\usepackage{graphicx}
\usepackage{subcaption}
\usepackage{booktabs} 
\usepackage{placeins}     

\usepackage{hyperref}

\usepackage[accepted]{icml2026}

\usepackage{amsmath}
\usepackage{amssymb}
\usepackage{mathtools}
\usepackage{amsthm}
\usepackage{xcolor}

\newcommand{\itxt}[1]{\textit{#1}}
\newcommand{\btxt}[1]{\textbf{#1}}
\newcommand{\itbf}[1]{\textbf{\textit{#1}}}

\newcommand{\cS}{\mathcal{S}}
\newcommand{\cA}{\mathcal{A}}
\newcommand{\cT}{\mathcal{T}}
\newcommand{\cH}{\mathcal{H}}

\newcommand{\rNum}{\mathbb{R}}

\newcommand{\transpose}{\intercal}

\usepackage[capitalize,noabbrev]{cleveref}

\theoremstyle{plain}
\newtheorem{theorem}{Theorem}[section]

\newtheorem{lemma}[theorem]{Lemma}

\theoremstyle{definition}

\theoremstyle{remark}

\let\cite\citet

\usepackage[textsize=tiny]{todonotes}

\icmltitlerunning{One for ALL: A Non-Linear Transformer can enable Cross-Domain Generalization for In-Context Reinforcement Learning } 

\begin{document}

\twocolumn[
  \icmltitle{One for All: A Non-Linear Transformer can Enable\\
    Cross-Domain Generalization for In-Context Reinforcement Learning }



  \icmlsetsymbol{equal}{}

  \begin{icmlauthorlist}
    \icmlauthor{Bowen He}{}
    \icmlauthor{Juncheng Dong}{}
    \icmlauthor{Lin Lin}{}
    \icmlauthor{Xiang Cheng}{}
  \end{icmlauthorlist}


  \icmlcorrespondingauthor{Bowen He}{bowen.he@duke.edu}

  \icmlkeywords{Machine Learning, ICML}

  \vskip 0.3in
]



\printAffiliationsAndNotice{Duke University, Durham, U.S.A}  

\begin{abstract}

A central challenge in reinforcement learning (RL) is to learn models that generalize beyond the tasks on which they are trained, a goal traditionally pursued through multi-task and meta RL. Recently, transformer architectures have emerged as a promising approach, enabling adaptation to new tasks via in-context learning without explicit parameter updates.  From a functional perspective, a transformer can be viewed as a \btxt{functional operator} that maps a context to a task-specific function. It is thus fundamental to understand and design this operator to support stronger generalization in RL.
In this work, we address this resulting question of generalization from a kernel-based perspective by establishing a connection between non-linear transformers and kernel-based temporal difference learning. By interpreting the transformer as performing regression in a \btxt{Reproducing Kernel Hilbert Space} (RKHS), we show that value functions from different domains can be represented using a shared set of weights, provided they lie within the same RKHS. Experiments on multiple MetaWorld domains support this interpretation, demonstrating convergence of the temporal-difference objective. 
\end{abstract}

\section{Introduction}
Conventional reinforcement learning (RL) methods typically learn task-specific policies and value functions that fail to transfer to new tasks, requiring substantial additional training or even learning from scratch. This has motivated the development of task-agnostic RL methods that support generalization across tasks, commonly referred to as multi-task or meta reinforcement learning. A growing line of work explores the use of transformers for this purpose, leveraging their \btxt{in-context learning} capability to implicitly implement adaptation algorithms in their parameters and to condition on task-specific context at inference time. Specifically, in addition to the query state $s_{query}$ at which a decision is required, a transformer also takes as input context information formulated as transitions from an environment $\tau_t \dot{=}(s_0,a_0,r_0,\cdots,s_{t-1},a_{t-1},r_{t-1})$. By conditioning on the context information, the model adapts its behavior for the same $s_{query}$.

While there has been empirical success in the literature, a principled understanding of the mechanisms underlying in-context RL remains limited, particularly with respect to how and why transformers generalize across tasks, and the conditions under which such generalization emerges. This gap hinders the development of more powerful in-context RL methods that generalize not only across tasks within a single domain, but also across domains themselves. Essentially, a transformer can be viewed as a \btxt{functional operator} that generates an implicit function conditioned on the input context. Studying and designing this operator is therefore central to understanding and advancing in-context RL.

In this work, we focus on the policy evaluation setting, where the goal is to estimate the value function of a fixed policy. While policy evaluation alone does not constitute a complete RL procedure—full RL additionally requires policy improvement, it serves as a fundamental subroutine in virtually all modern RL algorithms. Understanding the mechanisms by which in-context models perform policy evaluation is therefore a necessary foundational step toward extending in-context RL to full control. Moreover, focusing on policy evaluation allows us to isolate and rigorously analyze the transformer’s ability to learn and generalize value functions, without the additional complexity introduced by policy optimization, leading to clearer theoretical and empirical insights.

One prior work studies this question in a linear setting~\citep{wang2025transformerslearntemporaldifference}, assuming both a linear Markov Reward Process (MRP) and a linear transformer. Under these assumptions, the authors show that a linear transformer can implement gradient descent on the temporal-difference (TD) loss, thereby fitting state-value functions within the forward pass of the transformer layers. However, this construction relies critically on strong linearity assumptions and therefore applies only to tasks that strictly satisfy these conditions, which limits its applicability in more complex and realistic settings.

Building on this view, we approach the problem from a kernel-based perspective. The attention mechanism can be interpreted as implicitly implementing a kernel function~\citep{cheng2024transformersimplementfunctionalgradient}, suggesting that nonlinear transformers perform regression in the corresponding Reproducing Kernel Hilbert Space (RKHS) induced by this kernel. This observation establishes a natural connection between nonlinear transformers and kernel-based RL, thereby relaxing the restrictive linearity assumptions on the tasks as well. More importantly, it implies that tasks whose value functions lie in the same RKHS can be addressed by a single transformer, even when they originate from different domains. We validate this hypothesis by applying a transformer with a fixed set of weights across multiple MetaWorld domains, demonstrating convergence of the TD loss. To our knowledge, we provide the first theoretical construction showing that in-context RL with transformers can generalize across domains, offering a foundation for next steps of developing more universal policies with in-context RL.

\paragraph{Contributions.}
\begin{itemize}
    \item \textbf{Kernel view of in-context policy evaluation.}
    We frame in-context \emph{policy evaluation} through the lens of kernel-based reinforcement learning by leveraging the observation that nonlinear attention induces an RKHS kernel, and use this perspective to relate transformer inference to kernelized TD updates (Section~\ref{sec:theory}).

    \item \textbf{Explicit construction of transformer weights.}
    We give an explicit weight construction showing that a depth-$L$ nonlinear transformer with \emph{fixed} weights can implement $L$ iterations of the kernel TD update purely through its forward pass, using context states as kernel centers (Section~\ref{sec:theory} and Appendix~\ref{app:construction}).

    \item \textbf{A sufficient condition for cross-domain reuse via a shared RKHS.}
    Building on the RKHS interpretation, we explain when cross-domain generalization can be expected to hold and when it may fail (Section~\ref{sec:theory} and Section~\ref{sec:limitations}).

    \item \textbf{Empirical support on real tasks.}
    We instantiate the theoretical construction in practice by applying the transformer to MetaWorld tasks~\citep{yu2021metaworldbenchmarkevaluationmultitask}, where the empirical results resonate with the theoretical predictions. (Section~\ref{sec:experiments}, Appendix~\ref{app:learning_curve} and Appendix~\ref{app:one_for_all})

    \item \textbf{Transfer analysis.}
    We evaluate whether a model trained on a single domain can be applied to other domains (Appendix~\ref{app:one_for_all}) and relate its success or failure to kernel-parameter mismatch, consistent with the fixed-kernel limitation discussed in Section~\ref{sec:limitations}. We further analyze a practical failure mode arising from discontinuities in the induced value landscape (Appendix~\ref{app:discontinuous_state_value}).
\end{itemize}


\section{Related Works}
\btxt{In-context learning.} The concept of in-context learning~\citep{xie2022explanationincontextlearningimplicit} was introduced to describe the phenomenon whereby pretrained large language models adapt their behavior by conditioning on a prompt, referred to as the context. Since its introduction, in-context learning has attracted substantial attention from the research community~\citep{ICL:akyürek2023learningalgorithmincontextlearning,ICL:dai2023gptlearnincontextlanguage,ICL:garg2023transformerslearnincontextcase,ICL:hahn2023theory,ICL:wang2024largelanguagemodelslatent}, as it offers a principled mechanism for realizing meta-learning: rather than representing a fixed function, the model implicitly implements an algorithm that adapts its behavior based on the input context.

To further understand the underlying mechanisms,~\cite{vonoswald2023transformerslearnincontextgradient} consider a linear setting and show that linear transformers can implement a family of gradient-descent algorithms, implicitly optimizing a function in their forward pass. This weight construction was later theoretically supported by subsequent work~\citep{ahn2023transformerslearnimplementpreconditioned,zhang2023trainedtransformerslearnlinear,mahankali2023stepgradientdescentprovably}, which shows that it closely matches the global optimum of the in-context loss for a one-layer transformer and that training such models converges to this optimum in polynomial time.

While the above works focus on linear transformers, another line of research interprets the attention mechanism as a kernel operation and designs attention computations based on different kernel choices~\citep{tsai2019transformerdissectionunifiedunderstanding,choromanski2022rethinkingattentionperformers,elnouby2021xcitcrosscovarianceimagetransformers,nguyen2022improvingtransformersprobabilisticattention}. Building on this perspective,~\cite{cheng2024transformersimplementfunctionalgradient} further demonstrates that nonlinear transformers can perform kernel regression with induced kernel functions, providing a principled explanation of their behavior.

\btxt{In-context reinforcement learning.} In-context reinforcement learning (RL) introduces additional complexity, as reinforcement learning algorithms typically involve dynamic programming procedures for policy evaluation and policy improvement~\citep{sutton1998reinforcement}. To isolate this complexity, prior empirical work pretrains transformers in a supervised manner for decision-making tasks~\citep{laskin2022incontextreinforcementlearningalgorithm,lee2023supervisedpretraininglearnincontext}, demonstrating that such pretrained models can perform effectively on tasks not explicitly encountered during training. Building on this idea, subsequent methods modify the structure of the context information to provide more informative trajectories, enabling transformers to implement policy improvement within their forward pass~\citep{shi2023crossepisodiccurriculumtransformeragents,huang2024incontextdecisiontransformerreinforcement,liu2023emergentagentictransformerchain}.

On the theoretical side, recent work has begun to formalize the algorithmic capabilities of transformers in in-context RL. \citet{lin2024transformersdecisionmakersprovable} provide a provable explanation for supervised pretraining-based approaches, showing that transformers can implement classical reinforcement learning algorithms such as UCB and Thompson sampling through their forward pass. Complementarily, \citet{wang2025transformerslearntemporaldifference} study policy evaluation in the linear setting, demonstrating that transformers can implicitly implement TD algorithms in their learned weights.  As in-context RL is a rapidly evolving area encompassing a wide range of paradigms and perspectives, we refer readers to the survey by \citet{moeini2025surveyincontextreinforcementlearning} for a more comprehensive overview.

Toward the goal of realizing a universal policy through in-context RL, \citet{kirsch2023towards} provide initial empirical evidence that a transformer pretrained on tasks from a single domain, specifically Ant in their experiments, can perform in-context adaptation to other domains such as Reacher, HalfCheetah, and CartPole. These results support the hypothesis that a transformer with a fixed set of weights can learn in context across multiple domains; however, the work does not offer a formal justification for this behavior. In this work, we take a first step toward providing such an explanation, shedding light on the development of more general and transferable policy models.

\btxt{Kernel-based reinforcement learning.} Kernel-based RL has a long history as a means of replacing the linear function approximation for value estimation, enabling flexible state representations and theoretical guarantees under certain assumptions~\citep{Ormoneit2002,engel2005reinforcementlearninggaussianprocesses}. A body of work studies kernelized TD learning for policy evaluation, including both batch and online variants, which embed value functions in a RKHS to achieve consistent estimation in continuous state spaces~\citep{10.1109/TNN.2007.899161,10.1145/1553374.1553504}. These methods provide principled alternatives to parametric function approximation by explicitly introducing nonlinear function classes for state-value estimation.

\section{Settings}
\btxt{Reinforcement Learning.} A decision-making problem is formulated as a Markov Decision Process (MDP), defined by the tuple $(\cS, \cA, \cT, R, p_0, \gamma)$, where $\cS$ is the state space, $\cA$ is the action space, $\cT$ denotes the transition dynamics, $R$ is the reward function, $p_0$ is the initial state distribution~\citep{sutton1998reinforcement}, and $\gamma$ is the discount factor.  In the continuous setting, both the state and action spaces are vector spaces, i.e., $\cS \in \rNum^{|\cS|}$ and $\cA \in \rNum^{|\cA|}$. Accordingly, the transition function maps a state-action pair to a distribution over next states, $\cT:\cS \times \cA \rightarrow \Delta(\cS)$, and the reward function assigns a scalar $R(s, a)$ to each state-action pair. A policy $\pi$ maps each state to a distribution over actions and thereby determines the total reward collected through interactions with the MDP. The state value for a state $s \in \cS$ is defined as
\[
V^{\pi}(s) = \mathbb{E}\!\left[ \sum_{t=0}^{\infty} \gamma^t R(s_t, a_t) \,\middle|\, s_0 = s \right],
\]
where the expectation is taken with respect to the transition dynamics $\cT$ and the policy $\pi$. 

Without loss of generality, fixing a policy $\pi$ reduces the original MDP to a Markov Reward Process (MRP) 
$(\cS, \cT^{\pi}, R^{\pi}, p_0, \gamma)$, where the transition dynamics and reward function are induced by $\pi$ as
\begin{align*}
    \cT^{\pi}(s' \mid s) &= \int \pi(a \mid s)\, \cT(s' \mid s, a)\, da,\\
    R^{\pi}(s) &= \int \pi(a \mid s)\, R(s, a)\, da.
\end{align*}
Correspondingly, a history of transitions up to time step $t$ in the MDP, 
$(s_0, a_0, r_0, \ldots, s_{t-1}, a_{t-1}, r_{t-1})$, 
is reduced to $(s_0, r_0, \ldots, s_{t-1}, r_{t-1})$, and the state-value function can be written as
\[
V(s) = \mathbb{E}\!\left[ \sum_{t=0}^{\infty} \gamma^t R^{\pi}(s_t) \,\middle|\, s_0 = s \right],
\]
where the expectation is taken only with respect to the transition dynamics $\cT^{\pi}$. The goal of policy evaluation is therefore to compute the state-value function $V(s)$ with interaction trajectories $(s_0, r_0, \ldots, s_{t-1}, r_{t-1})$.

Linear policy evaluation approximates the value function using a linear function of features,
\[
V(s) \approx w^{\transpose}\phi(s),
\]
where $\phi(s)$ is a predefined or learned feature map that embeds a state into the feature space.  
Kernel-based policy evaluation instead models the value function in an RKHS
$\cH_\kappa$ induced by a kernel $\kappa$:
\[
V(s)=\langle V,\kappa(s,\cdot)\rangle_{\cH_\kappa}
\approx \sum_{i=1}^{n}\alpha_i\kappa(s^{(i)},s).
\]
Given the current iterate $V^{(m)}$, we define the empirical TD error as
\[
\tb{m}_t
=
r_t+\gamma V^{(m)}(s_{t+1})-V^{(m)}(s_t).
\]
The kernelized semi-gradient TD update is thus
\[
V^{(m+1)}(\cdot)
=
V^{(m)}(\cdot)
+
\alpha_m
\sum_{t=0}^{n-1}
\tb{m}_t\,\kappa(s_t,\cdot),
\]
which treats the bootstrapped target
$r_t+\gamma V^{(m)}(s_{t+1})$ as fixed within each update. Thus, this is
the RKHS analogue of classical semi-gradient TD, rather than exact
gradient descent on the full squared Bellman residual.

We show that a nonlinear transformer can implement this kernelized TD
update in context: each layer performs one TD-style update over the
in-context transitions, and the multi-layer forward pass carries out
iterative RKHS-based policy evaluation.



\btxt{Transformer.} Central to the transformer architecture is the attention mechanism, which computes similarity scores and aggregates value vectors according to these scores. We define the attention operation as
\[
\Attn^{\tilde{h}}_{K, Q, V}(Z) \dot{=} VZ \cdot M \cdot \tilde{h}(KZ, QZ),
\]
where $Z \in \rNum^{(2d+1) \times (n+1)}$ denotes the input, and $K, Q, V \in \rNum^{(2d+1) \times (2d+1)}$ are the key, query, and value projection matrices, respectively. The function $\tilde{h}$ is defined as
\[
\tilde{h} : \rNum^{(2d+1) \times (n+1)} \times \rNum^{(2d+1) \times (n+1)} \rightarrow \rNum^{(n+1) \times (n+1)},
\]
and serves as the nonlinear activation in the attention computation. The matrix $M$ is a masking matrix given by
\[
    M \dot{=} 
    \begin{bmatrix}
        \mI_n & \mathbf{0}_{n\times1}\\
        \mathbf{0}_{1\times n} & 0        
    \end{bmatrix},
    \numberthis \label{e:M}
\]
which prevents information flow from the query column to the context during attention.

Correspondingly, a transformer layer is defined via a residual update,
\begin{equation}
\label{eq:oneHeadTransformer}
    Z_{l+1} \dot{=} Z_l + \Attn^{\tilde{h}}_{K_l, Q_l, V_l}(Z_l),
\end{equation}
and its multi-head variant is given by
\begin{equation}
\label{eq:multiHeadTransformer}
    Z_{l+1} \dot{=} Z_l + \sum_{i=1}^H \Attn^{\tilde{h}}_{K_l^i, Q_l^i, V_l^i}(Z_l).
\end{equation}
For a $k$-layer transformer, either~\ref{eq:oneHeadTransformer} or~\ref{eq:multiHeadTransformer} is applied iteratively to update the input matrix layer by layer. We denote the  output matrices $Z_l$ of the transformer as the output
\[
    \text{TF}_l^\theta(Z_0) \dot{=} \, Z_l, \quad l \in \{1, \ldots, k\},
\]
where $\theta$ denotes the collection of parameters of the transformer.

\btxt{Non-linear activation.}~\citet{cheng2024transformersimplementfunctionalgradient} shows that a nonlinear activation function in the attention mechanism can be interpreted as inducing a kernel that captures similarity between input vectors. When $\tilde{h}$ is chosen to be linear, for example,
\[
    [\tilde{h}(U, V)]_{i,j} \dot{=} U_i^\transpose V_j,
\]
the transformer reduces to a linear transformer, and the corresponding kernel function is
\[
    \kappa(x, y) = x^\transpose y.
\]
In contrast, when the activation is nonlinear, such as the exponential form
\[
    [\tilde{h}(U, V)]_{i,j} \dot{=} \exp\!\left(\frac{U_i^\transpose V_j}{\delta}\right),
    \numberthis \label{e:th}
\]
the induced kernel becomes nonlinear and is given by
\[
    \kappa(x, y) = \exp\!\left(\frac{x^\transpose y}{\delta}\right),
    \numberthis \label{e:kappa}
\]
where $\delta$ is a temperature parameter. Under this interpretation, a transformer with a fixed set of weights can be viewed as performing kernel regression in its forward pass. Building on this perspective, we show that nonlinear transformers can be associated with kernel-based policy evaluation, enabling a single set of weights to operate effectively across multiple domains.

\begin{figure*}[t]
    \centering
    \begin{subfigure}{0.19\textwidth}
        \centering
        \includegraphics[width=\textwidth]{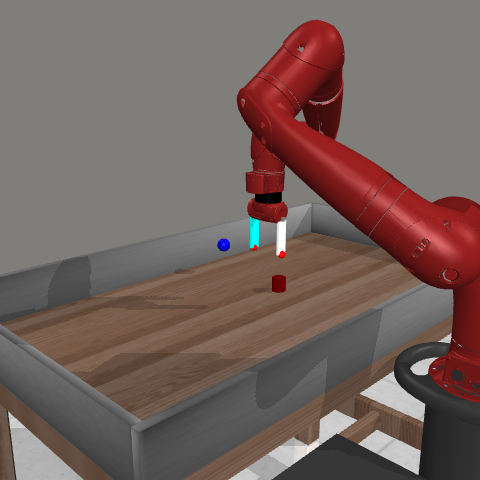}
    \end{subfigure}
    \begin{subfigure}{0.19\textwidth}
        \centering
        \includegraphics[width=\textwidth]{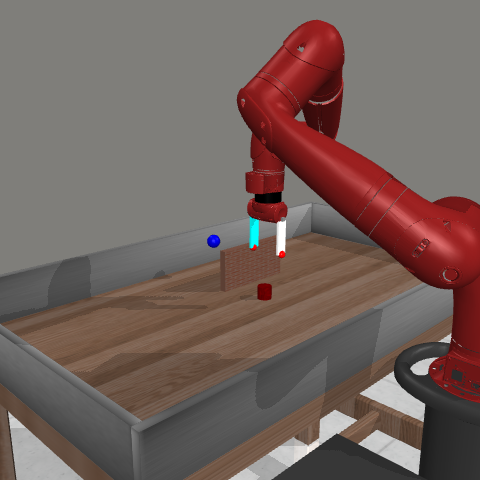}
    \end{subfigure}
    \begin{subfigure}{0.19\textwidth}
        \centering
        \includegraphics[width=\textwidth]{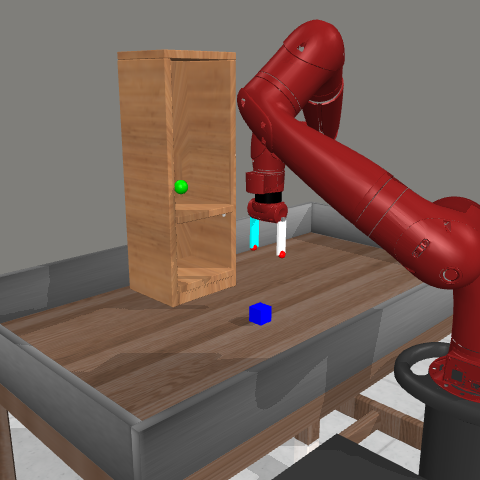}
    \end{subfigure}
    \begin{subfigure}{0.19\textwidth}
        \centering
        \includegraphics[width=\textwidth]{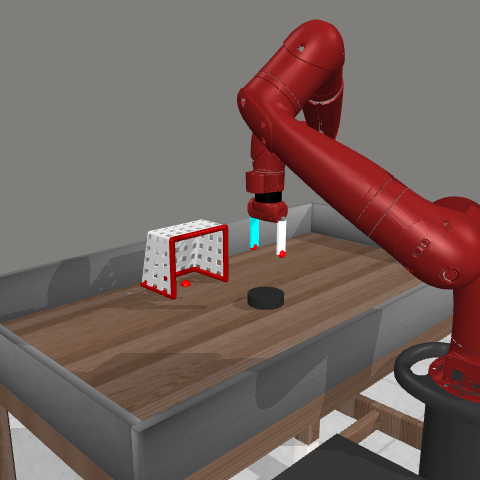}
    \end{subfigure}
    \begin{subfigure}{0.19\textwidth}
        \centering
        \includegraphics[width=\textwidth]{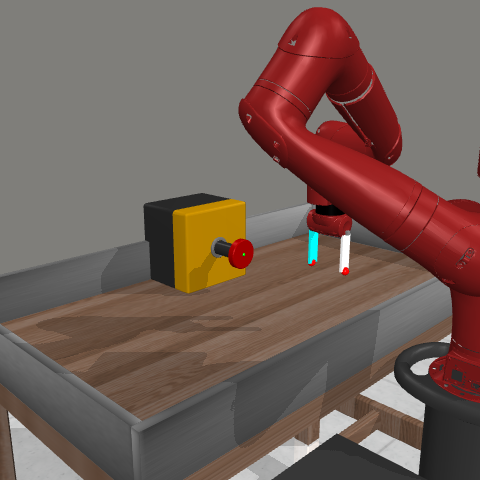}
    \end{subfigure}
    \caption{Task domains from MetaWorld. From left to right: Pick-Place, Pick-Place-Wall, Shelf-Place, Plate-Slide, and Button-Press.}
    \label{fig:env_figures}
\end{figure*}

\section{The In-Context Kernel TD Update}
\label{sec:theory}
\paragraph{Notation.}
We use $i,j$ to denote the \emph{trajectory} time index, and $k,\ell$ to denote the iteration index of the TD algorithm.
Let $\ts{i}\in \Re^d$ denote the state vector at trajectory index $i$ and let $\tr{i}\in\Re$ denote the reward at time $i$.
Let $\tv{i}_k$ denote the value of state $\ts{i}$ after $k$ iterations of the TD update.
Define the TD residual
\[
\tb{i}_k \;:=\; \tr{i} + \gamma \tv{i+1}_k - \tv{i}_k.
\]
We assume the kernel $\kappa$ is symmetric, i.e.\ $\kappa(x,y)=\kappa(y,x)$.

\paragraph{Kernel TD update.}
One step of the kernel TD update is
\begin{align}
    \tv{i}_{k+1}
    = \tv{i}_k + \alpha_k \sum_{j=0}^{n-1} \tb{j}_k \kappa(\ts{j}, \ts{i}).
    \label{e:linear_td_standard}
\end{align}

\paragraph{Equivalent update for $\tb{i}_k$.}
Using the definition of $\tb{i}_k$ and \eqref{e:linear_td_standard}, we obtain
\begin{alignat*}{1}
    \tb{i}_{k+1}
    &= \tb{i}_k + \gamma \lrp{\tv{i+1}_{k+1} - \tv{i+1}_k} - \lrp{\tv{i}_{k+1} - \tv{i}_k}\numberthis \label{e:tb}\\
    &= \tb{i}_k
    + \underbrace{\gamma \alpha_k \sum_{j=0}^{n-1} \tb{j}_k \kappa(\ts{j}, \ts{i+1})}_{\Attn^2}
    - \underbrace{\alpha_k \sum_{j=0}^{n-1} \tb{j}_k \kappa(\ts{j},\ts{i})}_{\Attn^1}.
\end{alignat*}

\paragraph{Initialization.}
We initialize $\tv{i}_0 = 0$ for all $i$. Consequently, $\tb{i}_0=\tr{i}$ for $i=0,\dots,n-1$.
For the query column we set $\tb{n}_0=0$.
We will construct a self-attention layer with two heads (denoted $\Attn^1$ and $\Attn^2$) which exactly compute the
two underbraced expressions in \eqref{e:tb}.

\subsection{Main Theoretical Result}

Let $Z_\ell \in \Re^{(2d+1)\times(n+1)}$ denote the output after $\ell$ Transformer layers.
We will show that a single fixed set of Transformer weights can implement $\ell$ steps of the TD iteration
\eqref{e:linear_td_standard} \emph{in-context}: the weights are universal, while the input matrix $Z_0$ encodes
the particular context states and the query state.

Let $\ts{\pad}$ denote a fixed padding state (can be taken to be $0$). For each TD iteration index $k$, recall that
$\tv{i}_k$ denotes the value of state $\ts{i}$ after $k$ iterations of TD; in particular,
$\tv{\pad}_k$ denotes the value of the padding state $\ts{\pad}$ after $k$ TD iterations.
For each $\ell\in\{0,1,\dots,L\}$, define
\begin{align}
    Z_\ell \;=\;
\begin{bmatrix}
    \ts{0} & \cdots & \ts{n-1} & \ts{n} \\
    \ts{1} & \cdots & \ts{n}   & \ts{\pad} \\
    \tb{0}_\ell & \cdots & \tb{n-1}_\ell & -\tv{n}_\ell + \gamma \tv{\pad}_\ell
\end{bmatrix}.
\label{e:zl}
\end{align}

Here the scalars $\{\tb{i}_\ell\}_{i=0}^{n-1}$ and $\tv{n}_\ell$ are the iterates produced by the $\ell$-step TD update
\eqref{e:linear_td_standard} (with kernel $\kappa$ defined in \eqref{e:kappa}) when applied to the trajectory
$(\ts{0},\dots,\ts{n})$.

For ease of discussion, we explicitly specify the initialization
\begin{align}
    Z_0 \;=\;
    \begin{bmatrix}
        \ts{0} & \cdots & \ts{n-1} & \ts{n} \\
        \ts{1} & \cdots & \ts{n}   & \ts{\pad} \\
        \tr{0} & \cdots & \tr{n-1} & 0
    \end{bmatrix}.
\label{e:z0}
\end{align}

\begin{theorem}[In-context implementation of $L$ steps of TD]
\label{thm:tf-td}
Fix $n$ and let $\ts{\pad}$ be a padding state. Consider the TD iteration \eqref{e:linear_td_standard}
with kernel $\kappa$ defined in \eqref{e:kappa}.
There exist Transformer weights
\[
\theta \;=\; \Bigl\{(K_{\ell}^{j},Q_{\ell}^{j},V_{\ell}^{j}) : j\in\{0,1\},\ \ell\in\{1,\dots,L\}\Bigr\},
\]
independent of the particular trajectory, such that the following holds:
for any trajectory $(\ts{0},\tr{0},\dots,\ts{n-1},\tr{n-1},\ts{n})$, if $Z_0$ is defined as in \eqref{e:z0}, then
\[
\TF^{\theta}_\ell(Z_0) \;=\; Z_\ell \qquad \text{for all }\ \ell\in\{1,\dots,L\},
\]
where $Z_\ell$ is as defined in \eqref{e:zl}. In particular, the last row of $Z_\ell$ contains the $\ell^{th}$ iteration TD values for each state.
\end{theorem}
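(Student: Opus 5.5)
Our approach is induction on $\ell$, with an explicit two-head construction per layer that only modifies the last row of $Z$. The base case $\ell=0$ is immediate: $\tv{i}_0=0$ gives $\tb{i}_0=\tr{i}$ and $-\tv{n}_0+\gamma\tv{\pad}_0=0$, which matches \eqref{e:z0}. For the inductive step we assume $\TF^\theta_\ell(Z_0)=Z_\ell$. We then choose $(K^j_{\ell+1},Q^j_{\ell+1},V^j_{\ell+1})$ so that the residual update \eqref{eq:multiHeadTransformer} leaves the first $2d$ rows fixed and adds to the last row exactly the two underbraced terms of \eqref{e:tb}. For the query column, the added quantity should be the increment of $-\tv{n}+\gamma\tv{\pad}$.

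Let $P_1\in\Re^{d\times(2d+1)}$ select the first block (current states) and $P_2$ the second block (next states). Let $e$ denote the last coordinate. For head $\Attn^1$ we take $K^1=Q^1$ to be the matrix whose top $d\times(2d+1)$ block is $P_1$ and whose other rows are zero. Then $[\tilde h(K^1Z,Q^1Z)]_{j,i}=\exp(\ts{j\top}\ts{i}/\delta)=\kappa(\ts{j},\ts{i})$ for all columns $i$, including the query column $i=n$. For the value we set $V^1=-\alpha_\ell\, e e^\top$, so $V^1Z$ has only a last row, equal to $-\alpha_\ell(\tb{0}_\ell,\dots,\tb{n-1}_\ell,\ast)$. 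The mask $M$ zeroes the contribution of column $n$ as a key, so the query column's own entry $\ast$ never enters the sum. The last-row entry at column $i$ is therefore $-\alpha_\ell\sum_{j=0}^{n-1}\tb{j}_\ell\kappa(\ts{j},\ts{i})$. For head $\Attn^2$ the key is the same as before, $K^2$ with $P_1$ in its top block. The query $Q^2$ instead places $P_2$ in that block, so the query vector at column $i$ is $\ts{i+1}$, and at $i=n$ it is $\ts{\pad}$. We take $V^2=\gamma\alpha_\ell ee^\top$. This gives $\gamma\alpha_\ell\sum_j\tb{j}_\ell\kappa(\ts{j},\ts{i+1})$. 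Symmetry of $\kappa$ is used here to match the argument order in \eqref{e:tb}. Since these weights depend only on $\alpha_\ell,\gamma,d,\delta$ and not on the trajectory, they are universal.

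It remains to check the columns. For $i\le n-1$, summing the two heads reproduces \eqref{e:tb} exactly, so the new last row is $\tb{i}_{\ell+1}$. For $i=n$, \eqref{e:linear_td_standard} extended to arbitrary evaluation points gives
\[
\tv{n}_{\ell+1}-\tv{n}_\ell=\alpha_\ell\textstyle\sum_j\tb{j}_\ell\kappa(\ts{j},\ts{n})
\]
and
\[
\tv{\pad}_{\ell+1}-\tv{\pad}_\ell=\alpha_\ell\textstyle\sum_j\tb{j}_\ell\kappa(\ts{j},\ts{\pad}).
\]
So head 1 supplies the change in $-\tv{n}$ and head 2 the change in $\gamma\tv{\pad}$, which yields the last entry of $Z_{\ell+1}$. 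One subtlety concerns $i=n-1$: there $\tb{n-1}$ involves $\tv{n}$, whose update uses $\kappa(\ts{j},\ts{n})$, and $\ts{n}$ correctly sits in the second block of column $n-1$.

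The main obstacle is the interpretation step rather than the algebra. We must make precise that $\tv{i}_k$, for evaluation points $\ts{n}$ and $\ts{\pad}$ that are not TD centers, is defined by the functional update $V^{(k+1)}=V^{(k)}+\alpha_k\sum_j\tb{j}_k\kappa(\ts{j},\cdot)$. Under that convention, the query-column bookkeeping and the invariance of the first $2d$ rows (zero rows in $V^j$) close the induction.
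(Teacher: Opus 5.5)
Your proposal is correct and follows the paper's proof essentially verbatim. It uses the same induction on $\ell$ and the same two heads: a shared key selecting the current-state block, a query selecting either the current-state or next-state block, value matrices $-\alpha_\ell e e^\top$ and $\gamma\alpha_\ell e e^\top$, and the mask $M$ to exclude the query column as a key. Your handling of the query column is in fact slightly more careful than the paper's: you obtain $-\tv{n}_{\ell+1}+\gamma\tv{\pad}_{\ell+1}$ where the paper writes this entry with a sign slip, and you state explicitly that values at non-center points such as $\ts{n}$ and $\ts{\pad}$ are defined through the functional TD update.
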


\paragraph{Discussion.} 
The transformer uses states from the context transitions as kernel centers for the TD update and directly evaluates the query state $\ts{n}$ to obtain its value. The value can be read out directly as the negative of the bottom-right entry of the matrix $Z_l$. In this sense, the transformer is explicitly equipped with a TD algorithm for estimating state values and primarily requires transition data from a new MRP to perform this computation. However, the current procedure introduces a constant bias that is independent of the query state; we defer a discussion of this bias to Section~\ref{sec:limitations}. Nevertheless, this formulation establishes a principled connection between non-linear transformers and kernelized reinforcement learning, thereby enabling analysis through the lens of kernel-based RL.
We provide the \itbf{explicit constructions of the the two heads} in appendix~\ref{app:construction} with additional details to analyze the computation. 


\begin{proof}[Proof of Theorem~\ref{thm:tf-td}]
We explicitly construct the weights $\theta$ by specifying, for each layer $\ell\in\{0,1,\dots,L-1\}$, a
two-head attention block whose first head implements the term $\Attn^1$ and whose second head implements the term
$\Attn^2$ in \eqref{e:tb}. Concretely, in layer $\ell+1$ we set
\[
(K_{\ell+1}^{1},Q_{\ell+1}^{1},V_{\ell+1}^{1}) \text{ to be the matrices in Lemma~\ref{lem:attn1},}
\]
\[
(K_{\ell+1}^{2},Q_{\ell+1}^{2},V_{\ell+1}^{2}) \text{ to be the matrices in Lemma~\ref{lem:attn2}.}
\]

We prove by induction on $\ell$ that the layer outputs satisfy $\TF_\ell^\theta(Z_0)=Z_\ell$, where $Z_\ell$ is defined
in \eqref{e:zl}.

\paragraph{Base case.}
By construction, the Transformer input is initialized as $Z_0$ in \eqref{e:z0}. This matches \eqref{e:zl} at $\ell=0$
since $\tv{i}_0=0$ for all $i$ and thus $\tb{i}_0=\tr{i}$ for $i=0,\dots,n-1$, with the query-column residual set to
$\tb{n}_0=0$.

\paragraph{Inductive step.}
Assume for some $\ell\in\{0,1,\dots,L-1\}$ that $\TF_\ell^\theta(Z_0)=Z_\ell$.
Apply the $(\ell+1)$-st Transformer layer. By Lemma~\ref{lem:attn1}, head $1$ produces an update matrix whose only
nonzero entries lie in the last row and equal
\[
-\alpha_\ell \sum_{j=0}^{n-1} \tb{j}_\ell \kappa(\ts{j},\ts{i})
\quad \text{in column } i.
\]
By Lemma~\ref{lem:attn2}, head $2$ produces an update matrix whose only nonzero entries lie in the last row and equal
\[
\gamma\alpha_\ell \sum_{j=0}^{n-1} \tb{j}_\ell \kappa(\ts{j},\ts{i+1})
\quad \text{in column } i\in\{0,\dots,n-1\},
\text{and}
\]
\[
\gamma\alpha_\ell \sum_{j=0}^{n-1} \tb{j}_\ell \kappa(\ts{j},\ts{\pad})
\quad \text{in column } n.
\]
Summing the two heads and adding the result to the current last row (as per the Transformer layer definition) yields,
for each $i\in\{0,\dots,n-1\}$,
\[
\tb{i}_{\ell+1}
=
\tb{i}_\ell
+ \alpha_\ell
\sum_{j=0}^{n-1} \tb{j}_\ell
\Big(
\gamma\,\kappa(\ts{j}, \ts{i+1})
- \kappa(\ts{j}, \ts{i})
\Big).
\]
which is exactly the TD residual update \eqref{e:tb}. The first two block-rows are unchanged by our choice of $V$ in both
heads, hence remain $(\ts{0},\dots,\ts{n})$ and $(\ts{1},\dots,\ts{n},\ts{\pad})$.
Finally, the query column update equals the same expression with $\ts{i+1}$ replaced by $\ts{\pad}$, giving precisely the
form $\tv{n}_{\ell+1} + \gamma \tv{\pad}_{\ell+1}$ in \eqref{e:zl}.

Therefore the layer output equals $Z_{\ell+1}$, completing the inductive step.

By induction, $\TF_\ell^\theta(Z_0)=Z_\ell$ for all $\ell\in\{1,\dots,L\}$. This conclude the proof.
\end{proof}

\section{Limitations}
\label{sec:limitations}

While our construction establishes a principled connection between transformers and kernel-based TD learning, several important limitations remain that warrant discussion and suggest promising directions for future work.

\btxt{Constant offset in value estimation.}
The current construction introduces a state-independent offset to all value estimates, specifically
\[
    V(\ts{n}) = \tv{n}_\ell - \gamma \tv{\pad}_\ell.
\]
This offset implies that the absolute value estimates contain a constant bias that does not vanish even when the underlying value function is perfectly represented within the RKHS. However, this limitation has restricted practical impact for two key reasons. First, in policy improvement procedures such as policy gradient methods or advantage-based algorithms, only \emph{relative} value comparisons between states or state-action pairs matter. The constant offset cancels when computing advantages or comparing action values, leaving policy learning unaffected. Second, when computing TD errors during training, the offset appears in both $V(s_t)$ and $V(s_{t+1})$, causing it to cancel in the TD error. This ensures that the TD loss converges to zero during training despite the absolute value bias, as we empirically demonstrate in Section~\ref{sec:experiments} about limitations.

Nevertheless, addressing this offset through architectural modifications—such as introducing additional normalization layers or learnable bias terms—represents a natural avenue for improving the construction's accuracy for applications requiring absolute value estimates.

\btxt{Fixed kernel parameters.}
A more fundamental limitation is that the kernel function parameters, such as the temperature $\delta$ in the exponential kernel $\kappa(x,y) = \exp(x^\transpose y / \delta)$, remain fixed throughout the forward pass and are not adapted based on the input context. This restricts the transformer's ability to handle tasks whose value functions require fundamentally different kernel characteristics. For instance, consider two tasks: one requiring smooth, slowly-varying value functions (suggesting a large kernel bandwidth) and another exhibiting sharp, localized value structure (requiring a small bandwidth). A single fixed kernel cannot optimally represent both.

From the perspective of our RKHS interpretation, this limitation can be understood as follows. Each choice of kernel $\kappa$ induces a distinct RKHS $\mathcal{H}_\kappa$ with its own native norm and function characteristics. Our construction guarantees accurate value function approximation only when $V^\star \in \mathcal{H}_\kappa$, i.e., when the true value function lies within the RKHS induced by the chosen kernel. If two tasks induce value functions $V_1^\star$ and $V_2^\star$ that lie in different RKHSs, then no single fixed kernel can represent both functions well.

This naturally raises the question of whether kernel parameters can be adapted in context, dynamically adjusting the induced RKHS based on contextual and task-specific characteristics, or whether it is possible to learn the kernel itself in context. We leave the investigation of these directions for future work.

Despite these limitations, our construction and analysis provide a justification for why transformers with fixed weights can perform policy evaluation across multiple domains, laying the groundwork for developing more sophisticated in-context RL methods with principled generalization guarantees.

\section{Experiments}
\label{sec:experiments}
MetaWorld~\citep{yu2021metaworldbenchmarkevaluationmultitask} provides an ideal testbed for analyzing multi-task RL methods. It features a diverse set of environments, each consisting of 50 task variations with different goal configurations. These environments can be viewed as distinct domains, as they require qualitatively different manipulation behaviors from the robot arm, posing a significant challenge for generalization. In addition, MetaWorld provides implementations of near-optimal policies for each domain. Accordingly, for each task within a domain, we construct the corresponding MRPs by injecting noise of different levels into the actions generated by the associated policy.

\subsection{Implementation}
\label{sec:implementation}
Our experiments are designed to validate the theoretical in-context TD mechanism rather than to train a generic transformer end-to-end. Accordingly, we instantiate the attention weights exactly as constructed in Appendix~\ref{app:construction}. We stack multiple transformer layers, each constructed to perform policy evaluation. During training, we introduce a learnable scalar parameter $\alpha$ and a constant $n$ to modulate the residual update, given by
\[
    Z_{l+1} \dot{=} Z_l + \frac{\alpha}{n} \sum_{i=1}^2 
    \Attn^{\tilde{h}}_{K_l^i, Q_l^i, V_l^i}(Z_l).
\]
Intuitively, $\alpha$ corresponds to the effective learning rate of the kernel regression implicitly implemented in the forward pass and therefore must be learned. The constant $n$ denotes the context length and is introduced to stabilize training. In our experiments, we adopt the exponential activation and the corresponding exponential kernel in the transformer layers, as this choice yields an exact equivalence between the attention computation and the induced kernel. The softmax activation is closely related to the exponential kernel, differing only by an additional normalization factor in the underlying kernel function, as noted by~\citep{cheng2024transformersimplementfunctionalgradient}. The choice between exponential and softmax activations effectively determines the induced RKHS.


\subsection{One-Domain Generalization}
We select the following domains to evaluate our transformer: \btxt{Pick-Place-v3}, \btxt{Pick-Place-Wall-v3}, \btxt{Shelf-Place-v3}, \btxt{Plate-Slide-v3}, and \btxt{Button-Press-v3}. Figure~\ref{fig:env_figures} illustrate the different domains. They span a range of difficulty levels and return scales, enabling us to verify the hypothesis underlying the proposed kernel-based perspective. Intuitively, tasks within the same domain are expected to exhibit similar state-value function landscapes, whereas tasks from different domains should induce more distinct function classes. If a transformer with a single set of weights can successfully model value functions across such heterogeneous domains, it would provide strong evidence for cross-domain generalization.

For each task within a domain, we inject noise at \btxt{high}, \btxt{medium}, and \btxt{low} levels to construct distinct MRPs. Table~\ref{tab:mrp_noise} from appendix~\ref{app:noise_level} reports the noise std and the estimated initial state values under different noise levels, demonstrating a coverage of the value range. To train the proposed transformer, we keep the model weights fixed and update only the scaling coefficients of the residual connections between layers. During training, we expose the transformer only to tasks with \btxt{high} noise levels for each domain, while evaluating its TD error on tasks with \btxt{medium} and \btxt{low} noise levels. This setup allows us to assess the in-context learning capability of the model across diverse MRPs, even within the same domain. Since each domain contains \btxt{50} tasks, evaluating three noise levels results in a total of \btxt{150} MRPs. To provide a comprehensive evaluation, at each evaluation point we sample \btxt{5} tasks from the MRPs at different noise levels and report the mean TD error.

Figure~\ref{fig:TD_over_tasks} reports the temporal-difference losses over the entire training process for each domain. \itxt{We restrict value computation to early episode states for practical reasons, as the induced value function is highly discontinuous over the full state space and thus challenging to estimate accurately.} A detailed discussion of this issue is provided in Appendix~\ref{app:discontinuous_state_value}. Nevertheless, the results show consistent decreases in TD loss across all domains, providing evidence that a transformer with a single set of weights can be effectively applied across domains and achieve convergence in terms of TD loss.

\begin{figure}[t]
    \centering

    \begin{subfigure}[t]{0.48\columnwidth}
        \centering
        \includegraphics[width=\linewidth,height=2.6cm,keepaspectratio]{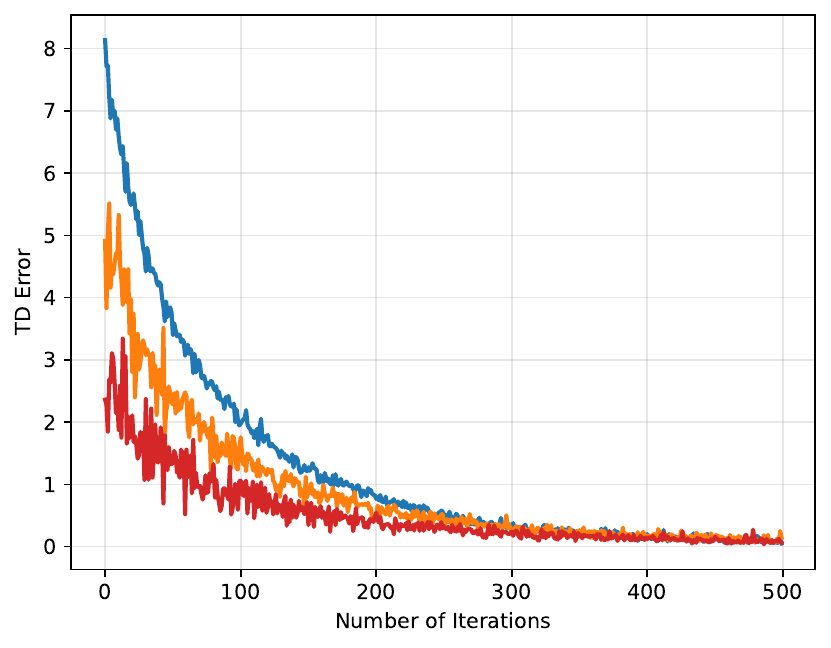}
        \caption{Pick-Place-v3}
    \end{subfigure}
    \hfill
    \begin{subfigure}[t]{0.48\columnwidth}
        \centering
        \includegraphics[width=\linewidth,height=2.6cm,keepaspectratio]{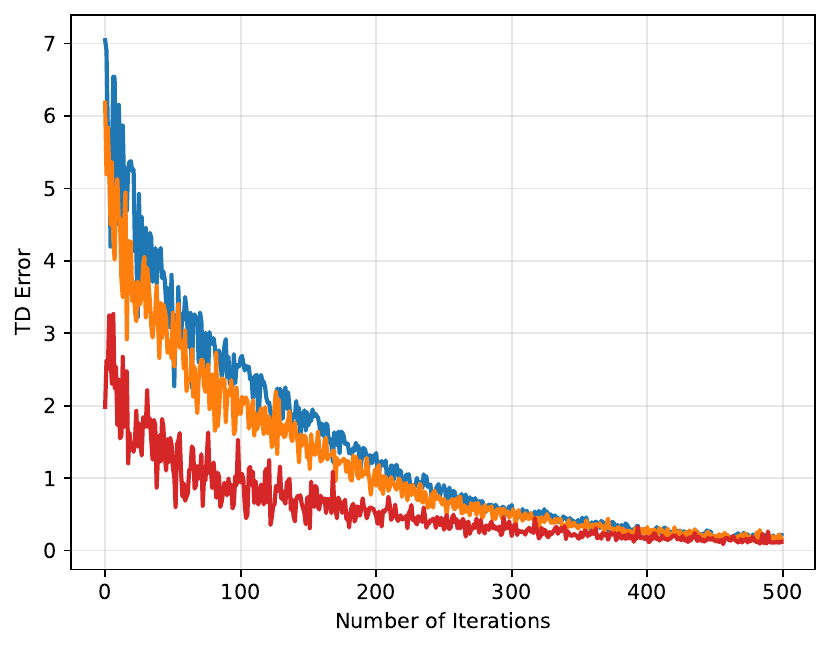}
        \caption{Pick-Place-Wall-v3}
    \end{subfigure}

    \vspace{0.25em}

    \begin{subfigure}[t]{0.48\columnwidth}
        \centering
        \includegraphics[width=\linewidth,height=2.6cm,keepaspectratio]{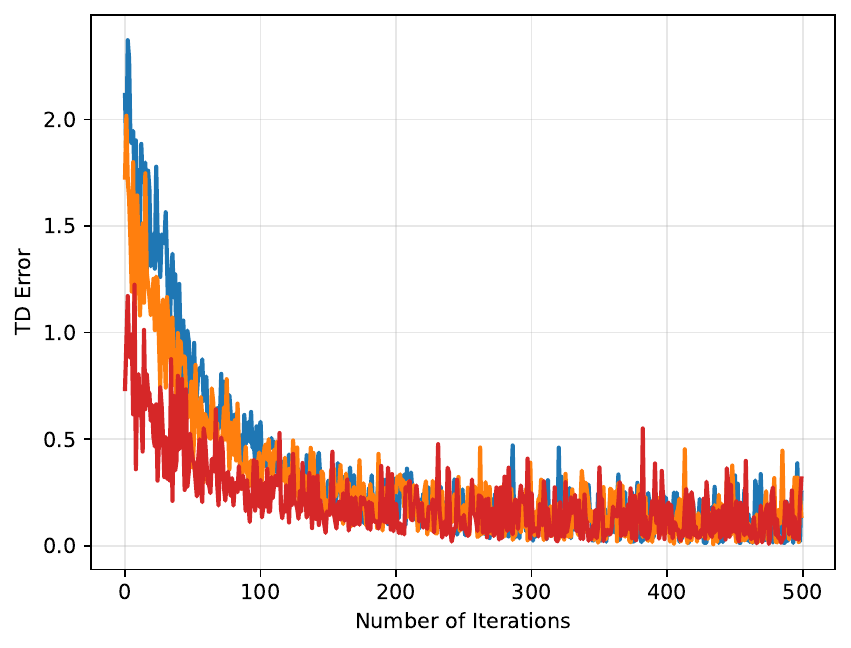}
        \caption{Shelf-Place-v3}
    \end{subfigure}
    \hfill
    \begin{subfigure}[t]{0.48\columnwidth}
        \centering
        \includegraphics[width=\linewidth,height=2.6cm,keepaspectratio]{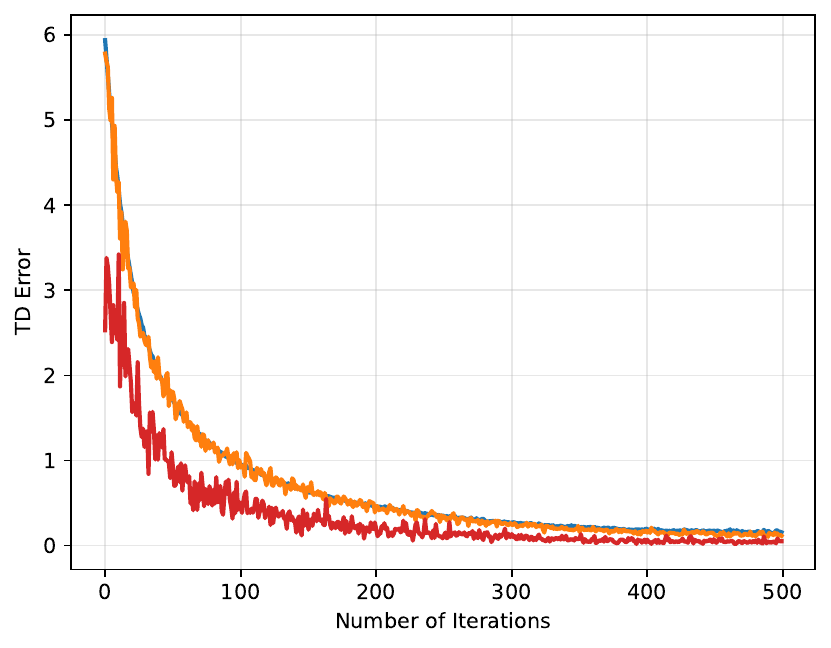}
        \caption{Plate-Slide-v3}
    \end{subfigure}

    \vspace{0.25em}

    \begin{subfigure}[t]{0.48\columnwidth}
        \centering
        \includegraphics[width=\linewidth,height=2.6cm,keepaspectratio]{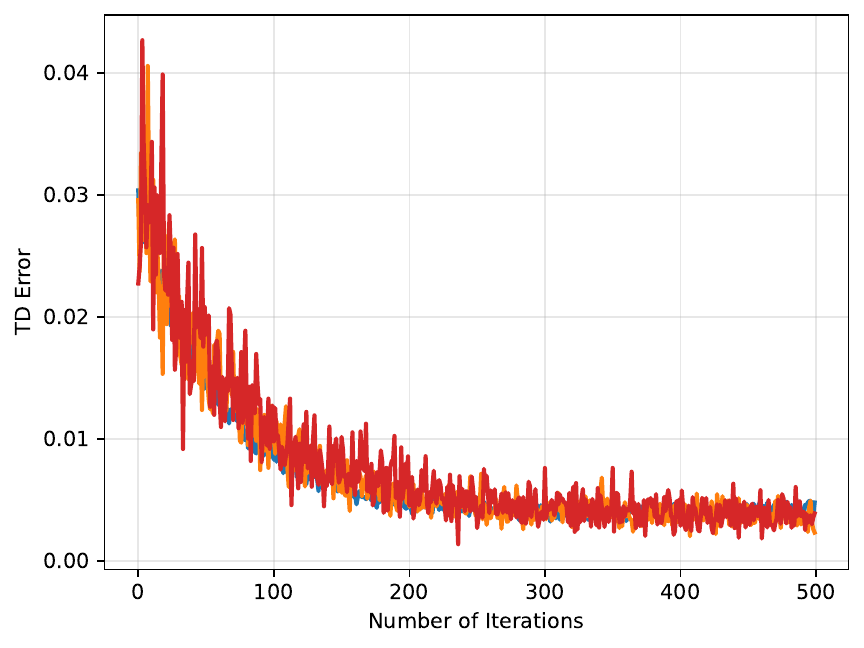}
        \caption{Button-Press-v3}
    \end{subfigure}
    \hfill
    \begin{subfigure}[t]{0.48\columnwidth}
        \centering
        \vspace{-5.0em} 
        \includegraphics[width=0.95\linewidth]{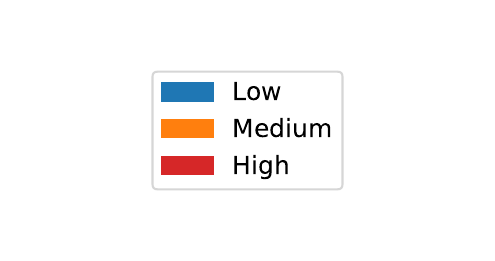}
    \end{subfigure}

    \vspace{-0.35em}
    \caption{Cross-task TD errors for multiple model checkpoints during training.}
    \label{fig:TD_over_tasks}
\end{figure}
\paragraph{Linear Transformer Fails.}
To provide additional insight, we report in Appendix~\ref{app:learning_curve} the learning curves of the non-linear transformer alongside those of the linear transformer from~\citet{wang2025transformerslearntemporaldifference}. As expected, the linear transformer performs poorly on MetaWorld tasks, since it can only represent linear functions, which is insufficient to capture the underlying value-function structure.

\subsection{Cross-Domain Generalization}
Another important question is whether the transformer can generalize across different domains, a central question directly reflected in the title of our work. As discussed in Section~\ref{sec:limitations}, we anticipate that such generalization is possible as long as the underlying functions can be well approximated within a single RKHS. To investigate this empirically, we present results in Appendix~\ref{app:one_for_all}, where models trained on one domain at different stages of training are applied to multiple domains. We observe that the models generalize well across Pick-Place-v3, Pick-Place-Wall-v3, Shelf-Place-v3, and Plate-Slide-v3, but fail to transfer to Button-Press-v3. This behavior is consistent with the hyperparameter analysis in Appendix~\ref{app:hyperparameters}, where the four domains share a common kernel parameter, while Button-Press-v3 requires a substantially different setting. 

It empirically supports our hypothesis that a single model can generalize not only across tasks within the same domain, but also across different domains.

\subsection{Synthetic Environment}
We design a synthetic MRP whose state value function lies in an RKHS.
The state space of the MRP is two-dimensional, which allows us to directly visualize the learned and ground-truth value functions using three-dimensional surface plots.
This synthetic environment provides a controlled setting for examining whether the transformer can recover the underlying value-function structure when the RKHS assumption is satisfied.
Figure~\ref{fig:syntheic_env} compares the model-approximated state value function with the ground-truth state value function.

Specifically, to construct the transformer input $Z_0$, we uniformly sample $32$ states from the state space as $s_0, \ldots, s_{n-1}$, together with their corresponding next states and rewards as context.
We then fix the context and vary the query state across the state space to obtain the predicted state values.
We use $30$ transformer layers, consistent with the setting used in the MetaWorld experiments.

As illustrated in Figure~\ref{fig:syntheic_env}, the transformer accurately captures the overall shape of the state value function.
Although the value ranges differ due to the constant bias term discussed in Section~\ref{sec:limitations}, the relative value comparisons are captured well.
This is reflected in the close alignment between the shapes of the model-approximated and ground-truth value functions.

We provide additional visualizations, ablation studies, and environment details in Appendix~\ref{app:synthetic_env}. These results further support our hypothesis.

\begin{figure}[t]
    \centering
    \begin{subfigure}{0.48\columnwidth}
        \centering
        \includegraphics[width=\linewidth]{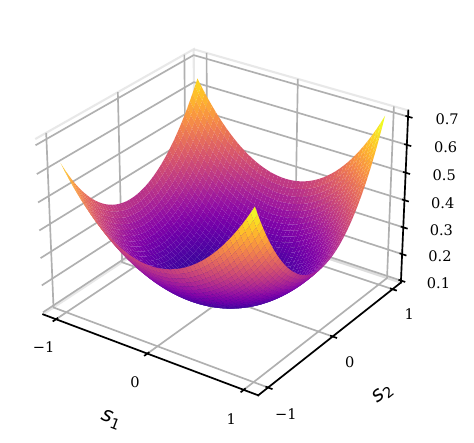}
        \caption{Model-approximated state value function.}
        \label{fig:main_synthetic_model_approximated}
    \end{subfigure}
    \hfill
    \begin{subfigure}{0.48\columnwidth}
        \centering
        \includegraphics[width=\linewidth]{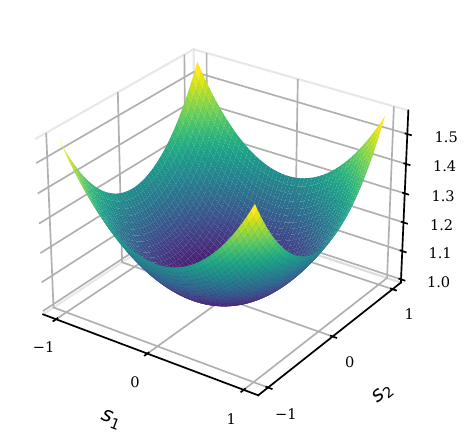}
        \caption{Ground-truth state value function.}
        \label{fig:main_synthetic_ground_truth}
    \end{subfigure}
    \caption{Comparison between the model-approximated and ground-truth state value functions.}
    \label{fig:syntheic_env}
\end{figure}

\section{Discussion}
This work provides a first theoretical construction that transformers with non-linear activations can generalize not only across tasks within a single domain, but also across domains in in-context reinforcement learning. By adopting a kernel-based reinforcement learning perspective, we establish a principled connection between non-linear transformers and kernel-based temporal-difference learning, showing that a single transformer with fixed weights can implement policy evaluation across multiple domains when the induced value functions lie within a shared RKHS.

Our analysis assumes a fixed RKHS induced by the transformer’s attention mechanism. Under this restriction, performance degrades when the underlying value function lies outside the induced function space. Importantly, this limitation is not merely empirical but follows directly from the RKHS interpretation: tasks whose value functions require substantially different kernel characteristics cannot be simultaneously represented by a single fixed kernel. This perspective provides a principled explanation for the observed empirical behavior, including cases where cross-domain transfer fails.

These observations naturally motivate future research directions, such as enabling in-context adaptation of kernel parameters or learning kernel representations directly from context, thereby allowing the induced function space to adjust to task-specific structure.

Finally, our focus on policy evaluation enables a precise theoretical analysis but leaves open questions regarding policy improvement and control. Extending the kernel-based interpretation developed here to these settings, as well as to architectures with adaptive kernel structure, represents a natural direction for future work. We view the present work as a foundational step toward understanding how such capabilities might emerge in in-context reinforcement learning.
\section*{Impact Statement}
This paper presents work whose goal is to advance the field
of Machine Learning. There are many potential societal
consequences of our work, none which we feel must be
specifically highlighted here.

\bibliography{main}

@book{sutton1998reinforcement,
  title={Reinforcement Learning: An Introduction},
  author={Sutton, R.S. and Barto, A.G.},
  isbn={9780262193986},
  lccn={97026416},
  series={A Bradford book},
  url={https://books.google.com/books?id=CAFR6IBF4xYC},
  year={1998},
  publisher={MIT Press}
}

@misc{wang2025transformerslearntemporaldifference,
      title={Transformers Can Learn Temporal Difference Methods for In-Context Reinforcement Learning}, 
      author={Jiuqi Wang and Ethan Blaser and Hadi Daneshmand and Shangtong Zhang},
      year={2025},
      eprint={2405.13861},
      archivePrefix={arXiv},
      primaryClass={cs.LG},
      url={https://arxiv.org/abs/2405.13861}, 
}

@misc{cheng2024transformersimplementfunctionalgradient,
      title={Transformers Implement Functional Gradient Descent to Learn Non-Linear Functions In Context}, 
      author={Xiang Cheng and Yuxin Chen and Suvrit Sra},
      year={2024},
      eprint={2312.06528},
      archivePrefix={arXiv},
      primaryClass={cs.LG},
      url={https://arxiv.org/abs/2312.06528}, 
}

@misc{yu2021metaworldbenchmarkevaluationmultitask,
      title={Meta-World: A Benchmark and Evaluation for Multi-Task and Meta Reinforcement Learning}, 
      author={Tianhe Yu and Deirdre Quillen and Zhanpeng He and Ryan Julian and Avnish Narayan and Hayden Shively and Adithya Bellathur and Karol Hausman and Chelsea Finn and Sergey Levine},
      year={2021},
      eprint={1910.10897},
      archivePrefix={arXiv},
      primaryClass={cs.LG},
      url={https://arxiv.org/abs/1910.10897}, 
}

@misc{xie2022explanationincontextlearningimplicit,
      title={An Explanation of In-context Learning as Implicit Bayesian Inference}, 
      author={Sang Michael Xie and Aditi Raghunathan and Percy Liang and Tengyu Ma},
      year={2022},
      eprint={2111.02080},
      archivePrefix={arXiv},
      primaryClass={cs.CL},
      url={https://arxiv.org/abs/2111.02080}, 
}

@misc{ICL:garg2023transformerslearnincontextcase,
      title={What Can Transformers Learn In-Context? A Case Study of Simple Function Classes}, 
      author={Shivam Garg and Dimitris Tsipras and Percy Liang and Gregory Valiant},
      year={2023},
      eprint={2208.01066},
      archivePrefix={arXiv},
      primaryClass={cs.CL},
      url={https://arxiv.org/abs/2208.01066}, 
}

@misc{ICL:dai2023gptlearnincontextlanguage,
      title={Why Can GPT Learn In-Context? Language Models Implicitly Perform Gradient Descent as Meta-Optimizers}, 
      author={Damai Dai and Yutao Sun and Li Dong and Yaru Hao and Shuming Ma and Zhifang Sui and Furu Wei},
      year={2023},
      eprint={2212.10559},
      archivePrefix={arXiv},
      primaryClass={cs.CL},
      url={https://arxiv.org/abs/2212.10559}, 
}

@misc{ICL:akyürek2023learningalgorithmincontextlearning,
      title={What learning algorithm is in-context learning? Investigations with linear models}, 
      author={Ekin Akyürek and Dale Schuurmans and Jacob Andreas and Tengyu Ma and Denny Zhou},
      year={2023},
      eprint={2211.15661},
      archivePrefix={arXiv},
      primaryClass={cs.LG},
      url={https://arxiv.org/abs/2211.15661}, 
}

@misc{ICL:wang2024largelanguagemodelslatent,
      title={Large Language Models Are Latent Variable Models: Explaining and Finding Good Demonstrations for In-Context Learning}, 
      author={Xinyi Wang and Wanrong Zhu and Michael Saxon and Mark Steyvers and William Yang Wang},
      year={2024},
      eprint={2301.11916},
      archivePrefix={arXiv},
      primaryClass={cs.CL},
      url={https://arxiv.org/abs/2301.11916}, 
}

@article{ICL:hahn2023theory,
  title={A theory of emergent in-context learning as implicit structure induction},
  author={Hahn, Michael and Goyal, Navin},
  journal={arXiv preprint arXiv:2303.07971},
  year={2023}
}

@misc{vonoswald2023transformerslearnincontextgradient,
      title={Transformers learn in-context by gradient descent}, 
      author={Johannes von Oswald and Eyvind Niklasson and Ettore Randazzo and João Sacramento and Alexander Mordvintsev and Andrey Zhmoginov and Max Vladymyrov},
      year={2023},
      eprint={2212.07677},
      archivePrefix={arXiv},
      primaryClass={cs.LG},
      url={https://arxiv.org/abs/2212.07677}, 
}

@misc{ahn2023transformerslearnimplementpreconditioned,
      title={Transformers learn to implement preconditioned gradient descent for in-context learning}, 
      author={Kwangjun Ahn and Xiang Cheng and Hadi Daneshmand and Suvrit Sra},
      year={2023},
      eprint={2306.00297},
      archivePrefix={arXiv},
      primaryClass={cs.LG},
      url={https://arxiv.org/abs/2306.00297}, 
}

@misc{zhang2023trainedtransformerslearnlinear,
      title={Trained Transformers Learn Linear Models In-Context}, 
      author={Ruiqi Zhang and Spencer Frei and Peter L. Bartlett},
      year={2023},
      eprint={2306.09927},
      archivePrefix={arXiv},
      primaryClass={stat.ML},
      url={https://arxiv.org/abs/2306.09927}, 
}

@misc{mahankali2023stepgradientdescentprovably,
      title={One Step of Gradient Descent is Provably the Optimal In-Context Learner with One Layer of Linear Self-Attention}, 
      author={Arvind Mahankali and Tatsunori B. Hashimoto and Tengyu Ma},
      year={2023},
      eprint={2307.03576},
      archivePrefix={arXiv},
      primaryClass={cs.LG},
      url={https://arxiv.org/abs/2307.03576}, 
}

@misc{tsai2019transformerdissectionunifiedunderstanding,
      title={Transformer Dissection: A Unified Understanding of Transformer's Attention via the Lens of Kernel}, 
      author={Yao-Hung Hubert Tsai and Shaojie Bai and Makoto Yamada and Louis-Philippe Morency and Ruslan Salakhutdinov},
      year={2019},
      eprint={1908.11775},
      archivePrefix={arXiv},
      primaryClass={cs.LG},
      url={https://arxiv.org/abs/1908.11775}, 
}

@misc{choromanski2022rethinkingattentionperformers,
      title={Rethinking Attention with Performers}, 
      author={Krzysztof Choromanski and Valerii Likhosherstov and David Dohan and Xingyou Song and Andreea Gane and Tamas Sarlos and Peter Hawkins and Jared Davis and Afroz Mohiuddin and Lukasz Kaiser and David Belanger and Lucy Colwell and Adrian Weller},
      year={2022},
      eprint={2009.14794},
      archivePrefix={arXiv},
      primaryClass={cs.LG},
      url={https://arxiv.org/abs/2009.14794}, 
}

@misc{elnouby2021xcitcrosscovarianceimagetransformers,
      title={XCiT: Cross-Covariance Image Transformers}, 
      author={Alaaeldin El-Nouby and Hugo Touvron and Mathilde Caron and Piotr Bojanowski and Matthijs Douze and Armand Joulin and Ivan Laptev and Natalia Neverova and Gabriel Synnaeve and Jakob Verbeek and Hervé Jegou},
      year={2021},
      eprint={2106.09681},
      archivePrefix={arXiv},
      primaryClass={cs.CV},
      url={https://arxiv.org/abs/2106.09681}, 
}

@misc{nguyen2022improvingtransformersprobabilisticattention,
      title={Improving Transformers with Probabilistic Attention Keys}, 
      author={Tam Nguyen and Tan M. Nguyen and Dung D. Le and Duy Khuong Nguyen and Viet-Anh Tran and Richard G. Baraniuk and Nhat Ho and Stanley J. Osher},
      year={2022},
      eprint={2110.08678},
      archivePrefix={arXiv},
      primaryClass={cs.LG},
      url={https://arxiv.org/abs/2110.08678}, 
}

@misc{laskin2022incontextreinforcementlearningalgorithm,
      title={In-context Reinforcement Learning with Algorithm Distillation}, 
      author={Michael Laskin and Luyu Wang and Junhyuk Oh and Emilio Parisotto and Stephen Spencer and Richie Steigerwald and DJ Strouse and Steven Hansen and Angelos Filos and Ethan Brooks and Maxime Gazeau and Himanshu Sahni and Satinder Singh and Volodymyr Mnih},
      year={2022},
      eprint={2210.14215},
      archivePrefix={arXiv},
      primaryClass={cs.LG},
      url={https://arxiv.org/abs/2210.14215}, 
}

@misc{lee2023supervisedpretraininglearnincontext,
      title={Supervised Pretraining Can Learn In-Context Reinforcement Learning}, 
      author={Jonathan N. Lee and Annie Xie and Aldo Pacchiano and Yash Chandak and Chelsea Finn and Ofir Nachum and Emma Brunskill},
      year={2023},
      eprint={2306.14892},
      archivePrefix={arXiv},
      primaryClass={cs.LG},
      url={https://arxiv.org/abs/2306.14892}, 
}

@inproceedings{
kirsch2023towards,
title={Towards General-Purpose In-Context Learning Agents},
author={Louis Kirsch and James Harrison and C. Freeman and Jascha Sohl-Dickstein and J{\"u}rgen Schmidhuber},
booktitle={NeurIPS 2023 Foundation Models for Decision Making Workshop},
year={2023},
url={https://openreview.net/forum?id=zDTqQVGgzH}
}

@misc{shi2023crossepisodiccurriculumtransformeragents,
      title={Cross-Episodic Curriculum for Transformer Agents}, 
      author={Lucy Xiaoyang Shi and Yunfan Jiang and Jake Grigsby and Linxi "Jim" Fan and Yuke Zhu},
      year={2023},
      eprint={2310.08549},
      archivePrefix={arXiv},
      primaryClass={cs.LG},
      url={https://arxiv.org/abs/2310.08549}, 
}

@misc{huang2024incontextdecisiontransformerreinforcement,
      title={In-Context Decision Transformer: Reinforcement Learning via Hierarchical Chain-of-Thought}, 
      author={Sili Huang and Jifeng Hu and Hechang Chen and Lichao Sun and Bo Yang},
      year={2024},
      eprint={2405.20692},
      archivePrefix={arXiv},
      primaryClass={cs.LG},
      url={https://arxiv.org/abs/2405.20692}, 
}

@misc{liu2023emergentagentictransformerchain,
      title={Emergent Agentic Transformer from Chain of Hindsight Experience}, 
      author={Hao Liu and Pieter Abbeel},
      year={2023},
      eprint={2305.16554},
      archivePrefix={arXiv},
      primaryClass={cs.LG},
      url={https://arxiv.org/abs/2305.16554}, 
}

@misc{lin2024transformersdecisionmakersprovable,
      title={Transformers as Decision Makers: Provable In-Context Reinforcement Learning via Supervised Pretraining}, 
      author={Licong Lin and Yu Bai and Song Mei},
      year={2024},
      eprint={2310.08566},
      archivePrefix={arXiv},
      primaryClass={cs.LG},
      url={https://arxiv.org/abs/2310.08566}, 
}

@misc{moeini2025surveyincontextreinforcementlearning,
      title={A Survey of In-Context Reinforcement Learning}, 
      author={Amir Moeini and Jiuqi Wang and Jacob Beck and Ethan Blaser and Shimon Whiteson and Rohan Chandra and Shangtong Zhang},
      year={2025},
      eprint={2502.07978},
      archivePrefix={arXiv},
      primaryClass={cs.LG},
      url={https://arxiv.org/abs/2502.07978}, 
}

@article{Ormoneit2002,
  author  = {Ormoneit, Dirk and Sen, \'{S}aunak},
  title   = {Kernel-Based Reinforcement Learning},
  journal = {Machine Learning},
  volume  = {49},
  number  = {2},
  pages   = {161--178},
  year    = {2002},
  month   = nov,
  doi     = {10.1023/A:1017928328829},
  issn    = {1573-0565},
  url     = {https://doi.org/10.1023/A:1017928328829}
}

@inproceedings{engel2005reinforcementlearninggaussianprocesses,
  title={Reinforcement Learning with Gaussian Processes},
  author={Engel, Yaakov and Mannor, Shie and Meir, Ron},
  booktitle={Proceedings of the 22nd International Conference on Machine Learning (ICML)},
  pages={201--208},
  year={2005}
}

@article{10.1109/TNN.2007.899161,
author = {Xu, Xin and Hu, Dewen and Lu, Xicheng},
title = {Kernel-Based Least Squares Policy Iteration for Reinforcement Learning},
year = {2007},
issue_date = {July 2007},
publisher = {IEEE Press},
volume = {18},
number = {4},
issn = {1045-9227},
url = {https://doi.org/10.1109/TNN.2007.899161},
doi = {10.1109/TNN.2007.899161},
journal = {Trans. Neur. Netw.},
month = jul,
pages = {973–992},
numpages = {20}
}

@inproceedings{10.1145/1553374.1553504,
author = {Taylor, Gavin and Parr, Ronald},
title = {Kernelized value function approximation for reinforcement learning},
year = {2009},
isbn = {9781605585161},
publisher = {Association for Computing Machinery},
address = {New York, NY, USA},
url = {https://doi.org/10.1145/1553374.1553504},
doi = {10.1145/1553374.1553504},
booktitle = {Proceedings of the 26th Annual International Conference on Machine Learning},
pages = {1017–1024},
numpages = {8},
location = {Montreal, Quebec, Canada},
}
\bibliographystyle{icml2026}

\newpage
\appendix
\onecolumn
\section{Construction Lemmas}
\label{app:construction}
\begin{lemma}[Construction for $\Attn^1$]
\label{lem:attn1}
Recall $\Attn^{\tilde{h}}_{K,Q,V}(Z) = VZ  M \tilde{h}(KZ,QZ)$, with
mask $M$ as defined in \eqref{e:M}.
Let
\begin{align*}
K &= 
\begin{bmatrix}
\mI_d & \mathbf{0}_{d\times d} & \mathbf{0}_{d\times 1}\\
\mathbf{0}_{d\times d} & \mathbf{0}_{d\times d} & \mathbf{0}_{d\times 1}\\
\mathbf{0}_{1\times d} & \mathbf{0}_{1\times d} & 0
\end{bmatrix},\quad
Q = 
\begin{bmatrix}
\mI_d & \mathbf{0}_{d\times d} & \mathbf{0}_{d\times 1}\\
\mathbf{0}_{d\times d} & \mathbf{0}_{d\times d} & \mathbf{0}_{d\times 1}\\
\mathbf{0}_{1\times d} & \mathbf{0}_{1\times d} & 0
\end{bmatrix},\\
V &= 
\begin{bmatrix}
\mathbf{0}_{d\times d} & \mathbf{0}_{d\times d} & \mathbf{0}_{d\times 1}\\
\mathbf{0}_{d\times d} & \mathbf{0}_{d\times d} & \mathbf{0}_{d\times 1}\\
\mathbf{0}_{1\times d} & \mathbf{0}_{1\times d} & -\alpha_\ell
\end{bmatrix}.
\end{align*}
Assume $\tilde{h}$ induces the kernel $\kappa$ on the first $d$ coordinates, i.e.
$[\tilde{h}(KZ_\ell,QZ_\ell)]_{j,i} = \kappa(\ts{j},\ts{i})$.
Then for any $Z_\ell$ of the form \eqref{e:zl},
\begin{align*}
\Attn^{\tilde{h}}_{K,Q,V}(Z_\ell)
=
\begin{bmatrix}
0 & \cdots & 0 & 0 \\
0 & \cdots & 0 & 0 \\
\ty{0}_\ell & \cdots & \ty{n-1}_\ell & \ty{n}_\ell
\end{bmatrix},
\end{align*}
where for each $i\in\{0,1,\dots,n\}$,
\[
\ty{i}_\ell \;:=\; -\alpha_\ell \sum_{j=0}^{n-1} \tb{j}_\ell \, \kappa(\ts{j},\ts{i}).
\]
\end{lemma}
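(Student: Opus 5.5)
The plan is a direct block-matrix computation: evaluate $VZ_\ell$, then right-multiply by $M$, then by $\tilde h(KZ_\ell,QZ_\ell)$, and read off each entry.

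\emph{Step 1: the value product $VZ_\ell$.} Write $Z_\ell$ in the row-block form of \eqref{e:zl}. The first two row blocks are the $d\times(n+1)$ state blocks, and the last row is
\[
\beta_\ell := \bigl(\tb{0}_\ell,\dots,\tb{n-1}_\ell,\,-\tv{n}_\ell+\gamma\tv{\pad}_\ell\bigr).
\]
The only nonzero entry of $V$ is $-\alpha_\ell$ in position $(2d+1,2d+1)$. Hence $VZ_\ell$ is zero in its first $2d$ rows, and its last row is $-\alpha_\ell\beta_\ell$.

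\emph{Step 2: apply the mask.} Right-multiplying by $M$ from \eqref{e:M} zeroes out the last column. So $VZ_\ell M$ has last row
\[
-\alpha_\ell\bigl(\tb{0}_\ell,\dots,\tb{n-1}_\ell,0\bigr).
\]
This is the step that removes the dependence on the query-column entry $-\tv{n}_\ell+\gamma\tv{\pad}_\ell$. It is why the sum in $\ty{i}_\ell$ runs only over $j\le n-1$.

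\emph{Step 3: the attention kernel.} Here $K=Q$ is the projection onto the first $d$ coordinates, so column $i$ of $KZ_\ell$ (and of $QZ_\ell$) is $(\ts{i},0,0)$. Under the stated hypothesis on $\tilde h$, for instance with the exponential activation \eqref{e:th}, the inner product of these padded vectors is $\ts{j}^\top\ts{i}$. This gives $[\tilde h(KZ_\ell,QZ_\ell)]_{j,i}=\kappa(\ts{j},\ts{i})$ for all $j,i\in\{0,\dots,n\}$, including the query column $i=n$.

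\emph{Step 4: assemble the entries.} Multiplying the result of Step 2 by this $(n+1)\times(n+1)$ matrix, the first $2d$ rows stay zero. The $(2d+1,i)$ entry is
\[
\sum_{j=0}^{n}\bigl[VZ_\ell M\bigr]_{2d+1,j}\,\kappa(\ts{j},\ts{i}) = -\alpha_\ell\sum_{j=0}^{n-1}\tb{j}_\ell\,\kappa(\ts{j},\ts{i}) = \ty{i}_\ell,
\]
where the $j=n$ term vanishes because of the mask. This is exactly the claimed form.

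The main obstacle is only bookkeeping, namely the indexing convention for $\tilde h$ (row index $j$ for keys, column index $i$ for queries). The one point to check is that the zero-padding introduced by $K$ and $Q$ leaves the inner products, and hence $\kappa$, unchanged. Symmetry of $\kappa$ makes the row/column orientation harmless in any case.
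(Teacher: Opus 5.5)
Your proposal is correct and matches the paper's proof. Both are the same direct block-matrix computation: $K=Q$ project onto the first $d$ coordinates to give the kernel matrix, $VZ_\ell$ isolates $-\alpha_\ell$ times the last row, $M$ zeroes the query column, and the final product yields $-\alpha_\ell\sum_{j=0}^{n-1}\tb{j}_\ell\,\kappa(\ts{j},\ts{i})$; you only compute the factors in a different order.
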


\begin{proof}
Write $Z_\ell=\begin{bmatrix}Z_\ell^{(1)}\\ Z_\ell^{(2)}\\ Z_\ell^{(3)}\end{bmatrix}$ where
$Z_\ell^{(1)}\in\Re^{d\times(n+1)}$ is the first block-row, $Z_\ell^{(2)}\in\Re^{d\times(n+1)}$ is the second
block-row, and $Z_\ell^{(3)}\in\Re^{1\times(n+1)}$ is the last row.

\paragraph{Step 1: $KZ_\ell$ and $QZ_\ell$.}
By the definitions of $K$ and $Q$, we have
\[
KZ_\ell = 
\begin{bmatrix}
Z_\ell^{(1)}\\ \mathbf{0}_{d\times(n+1)}\\ \mathbf{0}_{1\times(n+1)}
\end{bmatrix},
\qquad
QZ_\ell =
\begin{bmatrix}
Z_\ell^{(1)}\\ \mathbf{0}_{d\times(n+1)}\\ \mathbf{0}_{1\times(n+1)}
\end{bmatrix}.
\]
Hence, by the assumption on $\tilde{h}$ in \eqref{e:th} and \eqref{e:kappa},
\[
\lrb{\tilde{h}(KZ_\ell,QZ_\ell)}_{j,i} = \kappa(\ts{j},\ts{i}).
\]

\paragraph{Step 2: $VZ_\ell$.}
By the definition of $V$, $VZ_\ell$ is zero in its first $2d$ rows and equals $-\alpha_\ell Z_\ell^{(3)}$
in its last row:
\[
VZ_\ell =
\begin{bmatrix}
\mathbf{0}_{d\times(n+1)}\\
\mathbf{0}_{d\times(n+1)}\\
-\alpha_\ell Z_\ell^{(3)}
\end{bmatrix}.
\]

\paragraph{Step 3: masking by $M$.}
Since $M=\begin{bmatrix}\mI_n & 0\\ 0 & 0\end{bmatrix}$, multiplying on the right zeros out the last (query) column,
so
\[
VZ_\ell\, M =
\begin{bmatrix}
\mathbf{0}_{d\times(n+1)}\\
\mathbf{0}_{d\times(n+1)}\\
-\alpha_\ell\,[\tb{0}_\ell,\dots,\tb{n-1}_\ell,0]
\end{bmatrix}.
\]

\paragraph{Step 4: multiply by affinity matrix.}
Therefore
\[
\Attn^{\tilde{h}}_{K,Q,V}(Z_\ell) = (VZ_\ell\,M)\,H
\]
has zero in its first $2d$ rows, and for each $i\in\{0,\dots,n\}$ its last-row entry is
\[
-\alpha_\ell \sum_{j=0}^{n-1} \tb{j}_\ell \, H_{j,i}
= -\alpha_\ell \sum_{j=0}^{n-1} \tb{j}_\ell \, \kappa(\ts{j},\ts{i})
\;=\; \ty{i}_\ell.
\]
This concludes the proof.
\end{proof}

\begin{lemma}[Construction for $\Attn^2$]
\label{lem:attn2}
Recall $\Attn^{\tilde{h}}_{K,Q,V}(Z) = VZ\, M\, \tilde{h}(KZ,QZ)$, with mask $M$ as defined in \eqref{e:M}.
Let
\begin{align*}
K &= 
\begin{bmatrix}
\mI_d & \mathbf{0}_{d\times d} & \mathbf{0}_{d\times 1}\\
\mathbf{0}_{d\times d} & \mathbf{0}_{d\times d} & \mathbf{0}_{d\times 1}\\
\mathbf{0}_{1\times d} & \mathbf{0}_{1\times d} & 0
\end{bmatrix},\quad
Q = 
\begin{bmatrix}
\mathbf{0}_{d\times d} & \mI_d & \mathbf{0}_{d\times 1}\\
\mathbf{0}_{d\times d} & \mathbf{0}_{d\times d} & \mathbf{0}_{d\times 1}\\
\mathbf{0}_{1\times d} & \mathbf{0}_{1\times d} & 0
\end{bmatrix},\\
V &= 
\begin{bmatrix}
\mathbf{0}_{d\times d} & \mathbf{0}_{d\times d} & \mathbf{0}_{d\times 1}\\
\mathbf{0}_{d\times d} & \mathbf{0}_{d\times d} & \mathbf{0}_{d\times 1}\\
\mathbf{0}_{1\times d} & \mathbf{0}_{1\times d} & \gamma \alpha_\ell
\end{bmatrix}.
\end{align*}
Assume $\tilde{h}$ induces the kernel $\kappa$ on the first $d$ coordinates in the sense that,
for $Z_\ell$ of the form \eqref{e:zl},
\[
\lrb{\tilde{h}(KZ_\ell,QZ_\ell)}_{j,i}
=
\begin{cases}
\kappa(\ts{j},\ts{i+1}), & i\in\{0,1,\dots,n-1\},\\
\kappa(\ts{j},\ts{\pad}), & i=n.
\end{cases}
\]
Then for any $Z_\ell$ of the form \eqref{e:zl},
\begin{align*}
\Attn^{\tilde{h}}_{K,Q,V}(Z_\ell)
=
\begin{bmatrix}
0 & \cdots & 0 & 0 \\
0 & \cdots & 0 & 0 \\
\ty{0}_\ell & \cdots & \ty{n-1}_\ell & \ty{n}_\ell
\end{bmatrix},
\end{align*}
where
\[
\ty{i}_\ell \;:=\;
\begin{cases}
\gamma \alpha_\ell \sum_{j=0}^{n-1} \tb{j}_\ell \, \kappa(\ts{j},\ts{i+1}), & i\in\{0,1,\dots,n-1\},\\[4pt]
\gamma \alpha_\ell \sum_{j=0}^{n-1} \tb{j}_\ell \, \kappa(\ts{j},\ts{\pad}), & i=n.
\end{cases}
\]
\end{lemma}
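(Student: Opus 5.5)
The plan mirrors the proof of Lemma~\ref{lem:attn1} step for step; only the query matrix changes, and it now selects the second block-row of $Z_\ell$ (the next-state block) instead of the first. I would again write $Z_\ell$ in its three block-rows $Z_\ell^{(1)},Z_\ell^{(2)},Z_\ell^{(3)}$ and carry out four computations in order.

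\emph{Key and query.} First compute $KZ_\ell$ and $QZ_\ell$. The matrix $K$ is the same as in Lemma~\ref{lem:attn1}, so $KZ_\ell$ has first block $Z_\ell^{(1)}$, whose columns are $\ts{0},\dots,\ts{n}$, and zeros elsewhere. The matrix $Q$ has $\mI_d$ in the $(1,2)$ block, so $QZ_\ell$ has first block $Z_\ell^{(2)}$, whose columns are $\ts{1},\dots,\ts{n},\ts{\pad}$, and zeros elsewhere. Inner products between the $j$-th column of $KZ_\ell$ and the $i$-th column of $QZ_\ell$ therefore pair $\ts{j}$ with $\ts{i+1}$ for $i<n$, and with $\ts{\pad}$ for $i=n$. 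Under \eqref{e:th} this gives $H:=\tilde{h}(KZ_\ell,QZ_\ell)$ with exactly the entries assumed in the statement. This is the one place where the shifted second block-row of \eqref{e:zl} is used.

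\emph{Value and mask.} Next, $V$ selects only the last row and scales it by $\gamma\alpha_\ell$. Hence $VZ_\ell$ is zero in its first $2d$ rows and equals $\gamma\alpha_\ell Z_\ell^{(3)}$ in its last row. Right-multiplying by $M$ from \eqref{e:M} zeros the query column, which leaves the last row $\gamma\alpha_\ell[\tb{0}_\ell,\dots,\tb{n-1}_\ell,0]$. The query-column entry $-\tv{n}_\ell+\gamma\tv{\pad}_\ell$ is thereby discarded, so it plays no role.

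\emph{Product with the affinity matrix.} Finally, form $(VZ_\ell M)H$. Its first $2d$ rows vanish. Its last-row entry in column $i$ is $\gamma\alpha_\ell\sum_{j=0}^{n-1}\tb{j}_\ell H_{j,i}$. Substituting the two cases for $H_{j,i}$ yields $\ty{i}_\ell$ as stated.

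\emph{Expected difficulty.} There is no real obstacle; the only point needing care is the index bookkeeping in the first step. I must check that the $\ts{j}$ coming from the keys stays restricted to $j\le n-1$. This holds automatically because the mask annihilates the $j=n$ term, even though $H_{n,i}$ itself is well defined. I must also confirm that the padding state enters only through column $n$ of $QZ_\ell$.
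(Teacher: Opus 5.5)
Your proposal is correct and follows the paper's proof essentially step for step: you compute $KZ_\ell$ and $QZ_\ell$ to identify the affinity entries, note that $V$ keeps only $\gamma\alpha_\ell$ times the last row, let $M$ zero out the query column, and read off the last row of $(VZ_\ell M)\,\tilde{h}(KZ_\ell,QZ_\ell)$. The only cosmetic difference is that you derive the affinity entries from the exponential form \eqref{e:th}, while the paper simply invokes the lemma's stated assumption on $\tilde{h}$; the two give the same entries.
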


\begin{proof}
Write $Z_\ell=\begin{bmatrix}Z_\ell^{(1)}\\ Z_\ell^{(2)}\\ Z_\ell^{(3)}\end{bmatrix}$ where
$Z_\ell^{(1)}\in\Re^{d\times(n+1)}$ is the first block-row, $Z_\ell^{(2)}\in\Re^{d\times(n+1)}$ is the second
block-row, and $Z_\ell^{(3)}\in\Re^{1\times(n+1)}$ is the last row.

\paragraph{Step 1: $KZ_\ell$ and $QZ_\ell$.}
By the definition of $K$, we have
\[
KZ_\ell = 
\begin{bmatrix}
Z_\ell^{(1)}\\ \mathbf{0}_{d\times(n+1)}\\ \mathbf{0}_{1\times(n+1)}
\end{bmatrix}.
\]
By the definition of $Q$, we have
\[
QZ_\ell = 
\begin{bmatrix}
Z_\ell^{(2)}\\ \mathbf{0}_{d\times(n+1)}\\ \mathbf{0}_{1\times(n+1)}
\end{bmatrix}.
\]
Hence, by the assumed kernel-inducing property of $\tilde{h}$ (stated in the lemma),
\[
\lrb{\tilde{h}(KZ_\ell,QZ_\ell)}_{j,i}=
\begin{cases}
\kappa(\ts{j},\ts{i+1}), & i\in\{0,\dots,n-1\},\\
\kappa(\ts{j},\ts{\pad}), & i=n.
\end{cases}
\]

\paragraph{Step 2: $VZ_\ell$.}
By the definition of $V$, $VZ_\ell$ is zero in its first $2d$ rows and equals $\gamma\alpha_\ell Z_\ell^{(3)}$
in its last row:
\[
VZ_\ell =
\begin{bmatrix}
\mathbf{0}_{d\times(n+1)}\\
\mathbf{0}_{d\times(n+1)}\\
\gamma\alpha_\ell Z_\ell^{(3)}
\end{bmatrix}.
\]

\paragraph{Step 3: masking by $M$.}
Multiplying on the right by $M$ zeros out the last (query) column, so
\[
VZ_\ell\, M =
\begin{bmatrix}
\mathbf{0}_{d\times(n+1)}\\
\mathbf{0}_{d\times(n+1)}\\
\gamma\alpha_\ell\,[\tb{0}_\ell,\dots,\tb{n-1}_\ell,0]
\end{bmatrix}.
\]

\paragraph{Step 4: multiply by affinity matrix.}
Therefore
\[
\Attn^{\tilde{h}}_{K,Q,V}(Z_\ell) = (VZ_\ell\,M)\,\tilde{h}(KZ_\ell,QZ_\ell)
\]
has zero in its first $2d$ rows, and for each $i\in\{0,\dots,n\}$ its last-row entry is
\[
\gamma\alpha_\ell \sum_{j=0}^{n-1} \tb{j}_\ell\, \lrb{\tilde{h}(KZ_\ell,QZ_\ell)}_{j,i}.
\]
Substituting the expression for $H_{j,i}$ yields
\begin{align*}
&\bigl[\Attn^{\tilde{h}}_{K,Q,V}(Z_\ell)\bigr]_{2d+1,i}\\
\\
&=
\begin{cases}
\gamma \alpha_\ell \sum_{j=0}^{n-1} \tb{j}_\ell\, \kappa(\ts{j},\ts{i+1}), & i\in\{0,\dots,n-1\},\\[4pt]
\gamma \alpha_\ell \sum_{j=0}^{n-1} \tb{j}_\ell\, \kappa(\ts{j},\ts{\pad}), & i=n,
\end{cases}
\end{align*}
which is exactly $\ty{i}_\ell$ as defined in the lemma, completing the proof.
\end{proof}
\newpage
\section{Learning Curves for Different Domains}
\label{app:learning_curve}
To provide additional evidence supporting our results, we include in this section the learning curves for each domain, averaged over five random seeds. For comparison, we also report the learning curves of the linear transformer baseline from~\cite{wang2025transformerslearntemporaldifference}. 

Specifically, the learning curves report the TD loss values used to optimize the model parameter $\alpha$, as described in Section~\ref{sec:implementation}. They illustrate how the adapted learning rate in the induced kernel regression affects the convergence of the TD loss. Figure~\ref{fig:loss_learning_curves} present the results. We use identical training configurations for both the non-linear and linear transformers. The only difference is that the linear transformer uses the default learning rate of $0.001$, as reported by the authors. The failure of the linear transformer to converge on MetaWorld tasks in terms of the TD error is expected, given that it is restricted to a linear function class, which is inadequate for the underlying value functions.
\begin{figure*}[hb]
    \centering

    \begin{subfigure}{0.40\textwidth}
        \centering
        \includegraphics[width=0.95\linewidth]{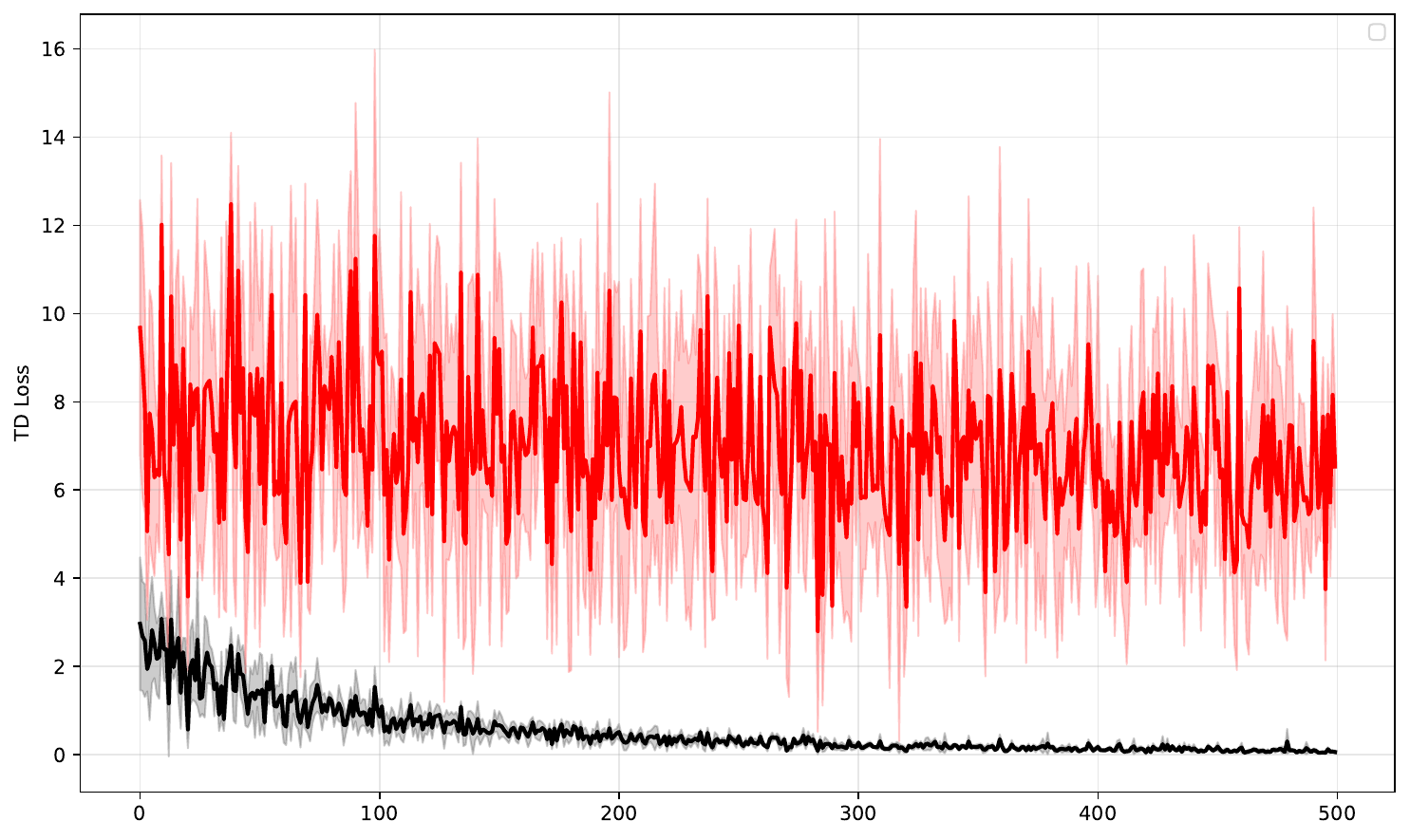}
        \caption{Pick-Place-v3}
        \label{fig:learning_curve:pick_and_place_learning_curve}
    \end{subfigure}
    \hfill
    \begin{subfigure}{0.40\textwidth}
        \centering
        \includegraphics[width=0.95\linewidth]{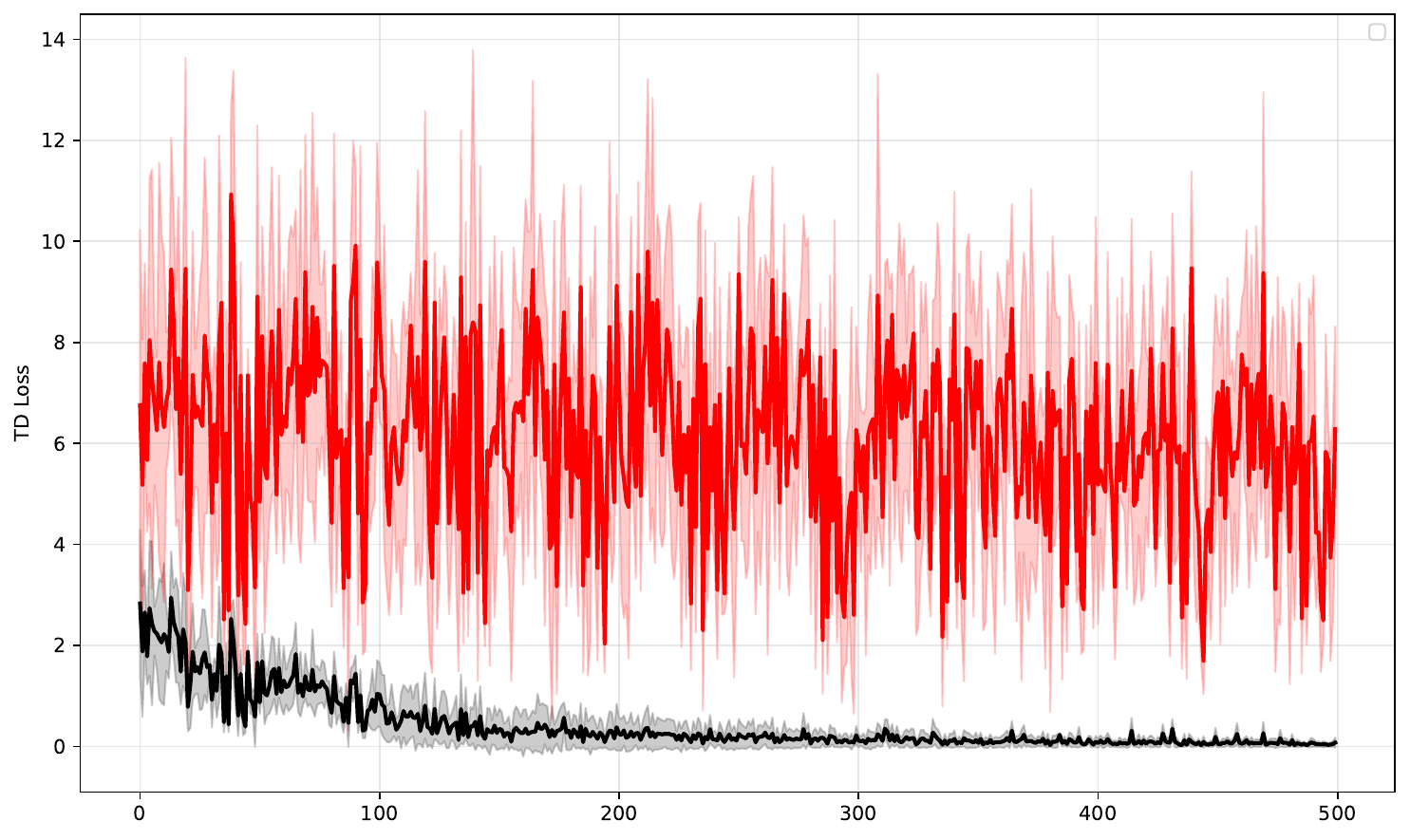}
        \caption{Pick-Place-Wall-v3}
        \label{fig:learning_curve:pick_and_place_wall_learning_curve}
    \end{subfigure}

    \vspace{0.25em}

    \begin{subfigure}{0.40\textwidth}
        \centering
        \includegraphics[width=0.95\linewidth]{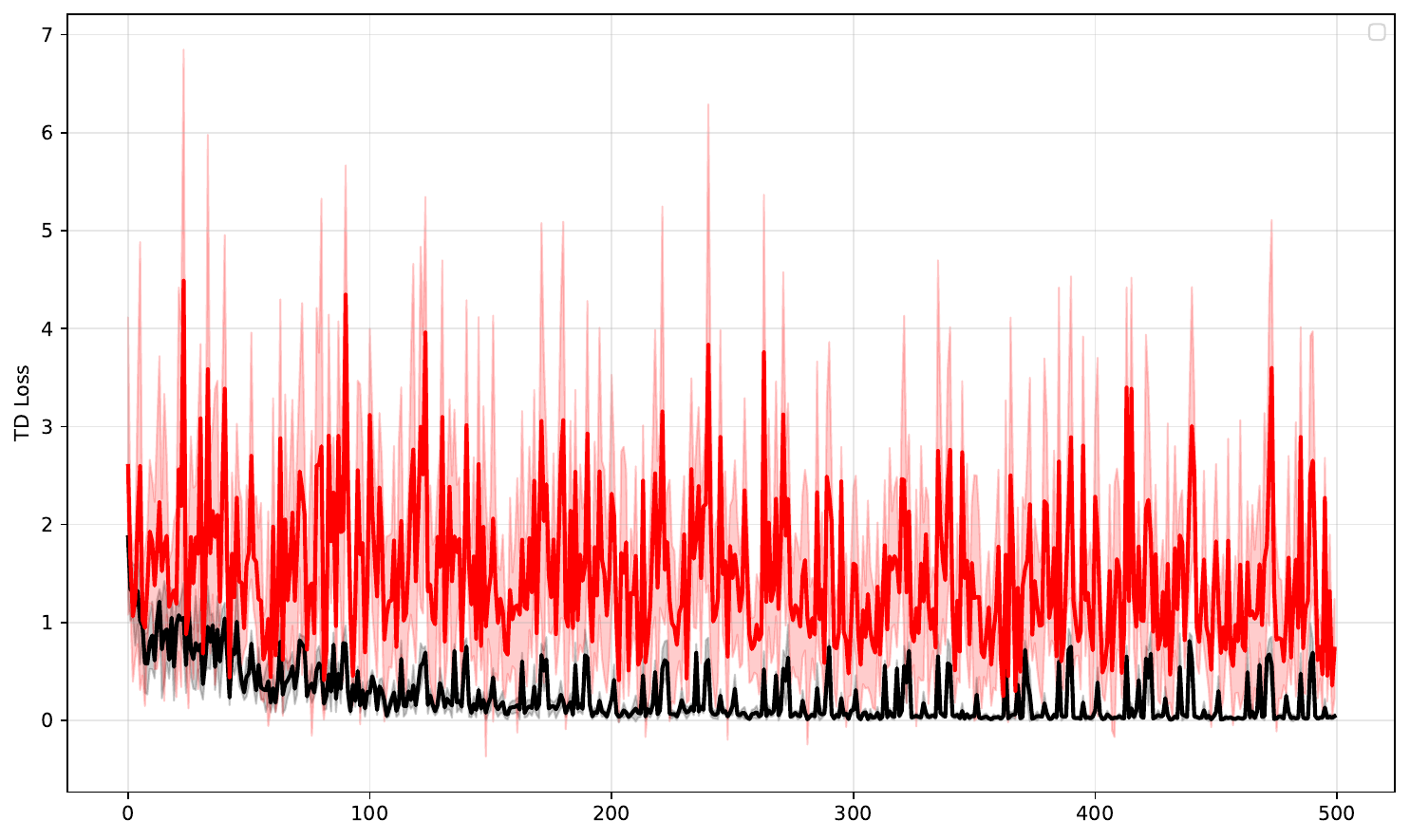}
        \caption{Shelf-Place-v3}
        \label{fig:learning_curve:shelf_place_learning_curve}
    \end{subfigure}
    \hfill
    \begin{subfigure}{0.40\textwidth}
        \centering
        \includegraphics[width=0.95\linewidth]{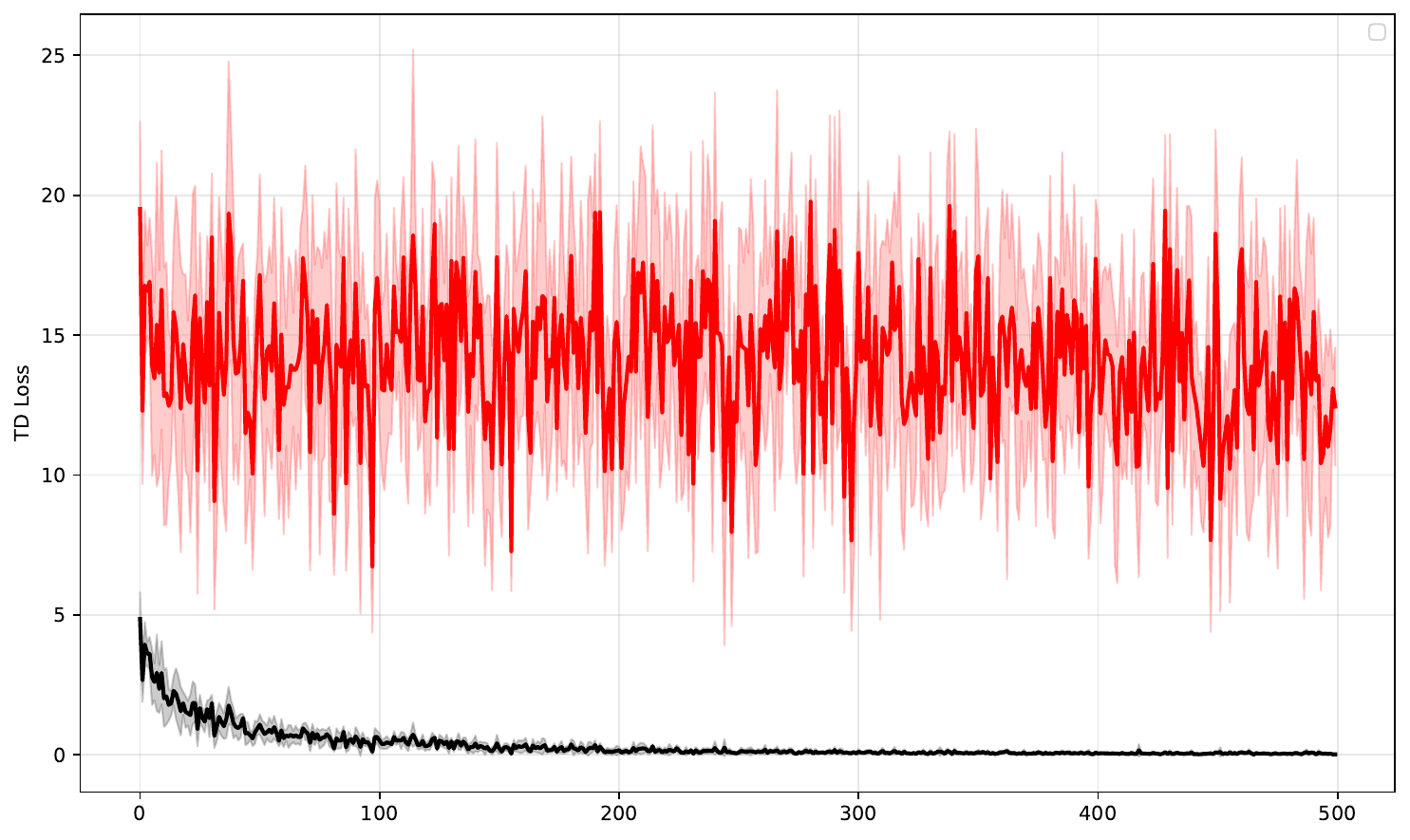}
        \caption{Plate-Slide-v3}
        \label{fig:learning_curve:Plate-Slide}
    \end{subfigure}

    \vspace{0.25em}

    \begin{subfigure}{0.45\textwidth}
        \centering
        \includegraphics[width=0.90\linewidth]{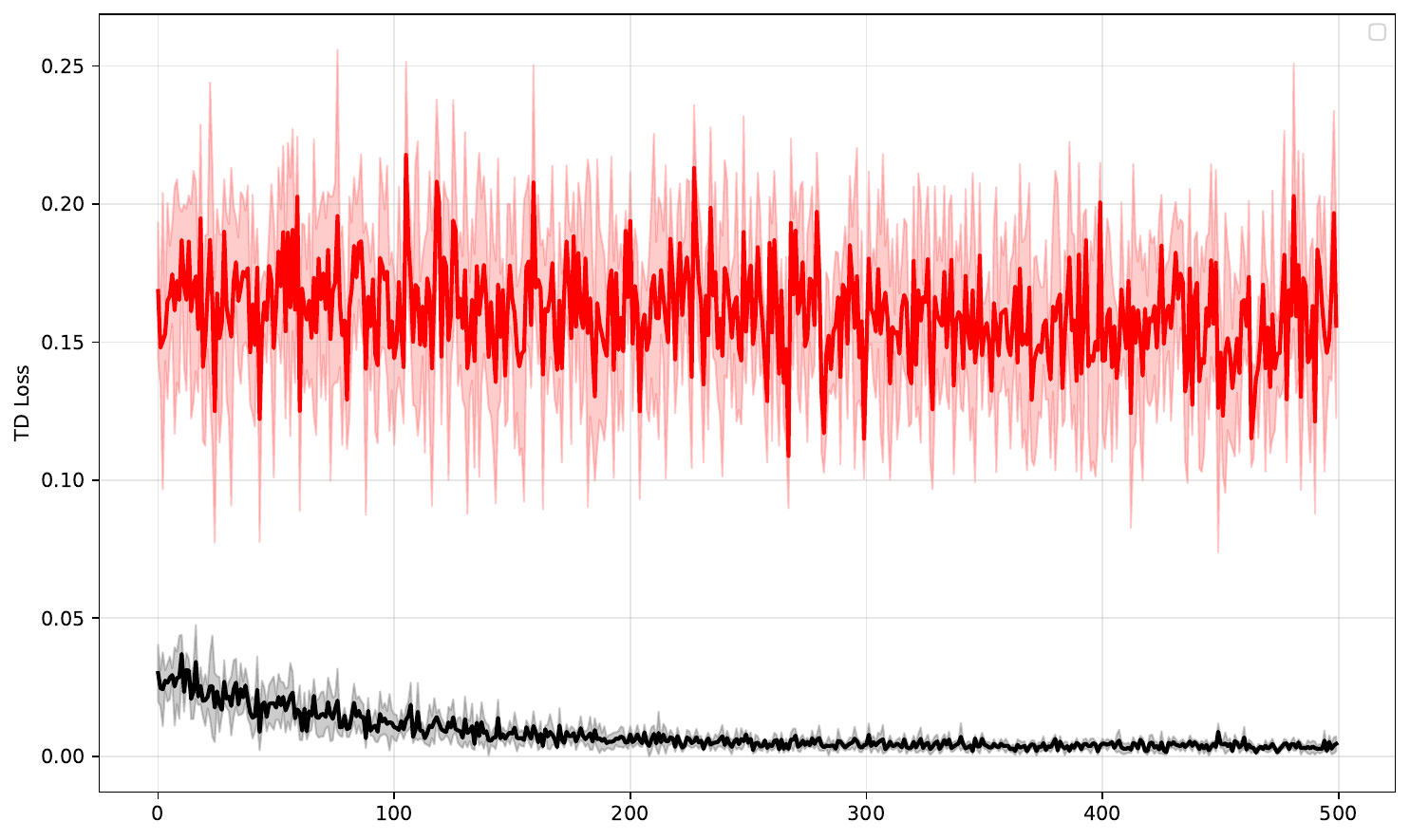}
        \caption{Button-Press-v3}
        \label{fig:learning_curve:Button-Press}
    \end{subfigure}
    \caption*{\footnotesize
        \textcolor{red}{\rule{1.5em}{2pt}} Linear Transformer \quad
        \textcolor{black}{\rule{1.5em}{2pt}} Our Non-Linear Transformer
    }
    \caption{Learning curves across five MetaWorld domains. Each curve is averaged over five random seeds.}
    \label{fig:loss_learning_curves}
\end{figure*}
\section{One Model for All Domains}
\label{app:one_for_all}
It is also natural to ask whether a model trained on a single domain can be reused across multiple domains, a question that directly reflects the title of this work. As shown in Appendix~\ref{app:hyperparameters}, four domains share a common kernel temperature, whereas \textbf{Button-Press-v3} requires a distinct setting. This observation suggests that models may generalize effectively across the four domains but fail to transfer to \textbf{Button-Press-v3}, a discrepancy whose underlying causes are discussed in Section~\ref{sec:limitations}.

To test this hypothesis, we report a cross-domain evaluation matrix averaged over five random seeds, where each row corresponds to a training domain and each column corresponds to an evaluation domain. For example, the first row shows the performance of models trained on \textbf{Pick-Place-v3} and evaluated across all five domains using checkpoints saved throughout training. Thus, the subplots along the diagonal correspond to settings where the training and evaluation domains are the same. Importantly, these models are evaluated directly in the target domains without any additional fine-tuning or training. Figure~\ref{fig:one_for_all} presents the resulting TD loss curves, showing that the TD error continues to decrease across the four domains, while diverging on Button-Press-v3, consistent with our hypothesis. 

Specifically, for models trained on \textbf{Pick-Place-v3}, \textbf{Pick-Place-Wall-v3}, \textbf{Shelf-Place-v3}, and \textbf{Plate-Slide-v3}, corresponding to the first four rows of the matrix, the TD loss generally continues to decrease when evaluated on the first four domains, corresponding to the first four columns of the matrix. The main exception is that models trained on \textbf{Pick-Place-v3} exhibit divergence on \textbf{Plate-Slide-v3} toward the end of training. This behavior may arise because \textbf{Plate-Slide-v3} exhibits a substantially different value range, and excessive adaptation of the scaling parameter $\alpha$ and the temperature parameter can induce an RKHS that is no longer well aligned with this domain. Nevertheless, the model converges over a large portion of the training trajectory. By contrast, the divergence of models evaluated on \textbf{Button-Press-v3} is expected, as discussed above. For models trained on \textbf{Button-Press-v3}, corresponding to the last row of the matrix, we observe that the value ranges differ substantially from those of the first four training domains and that the TD loss exhibits significant divergence when transferred to other domains. This further suggests that \textbf{Button-Press-v3} is substantially different from the other domains, consistent with our hypothesis.
\begin{figure}[t]
    \centering
    \includegraphics[width=0.98\columnwidth]{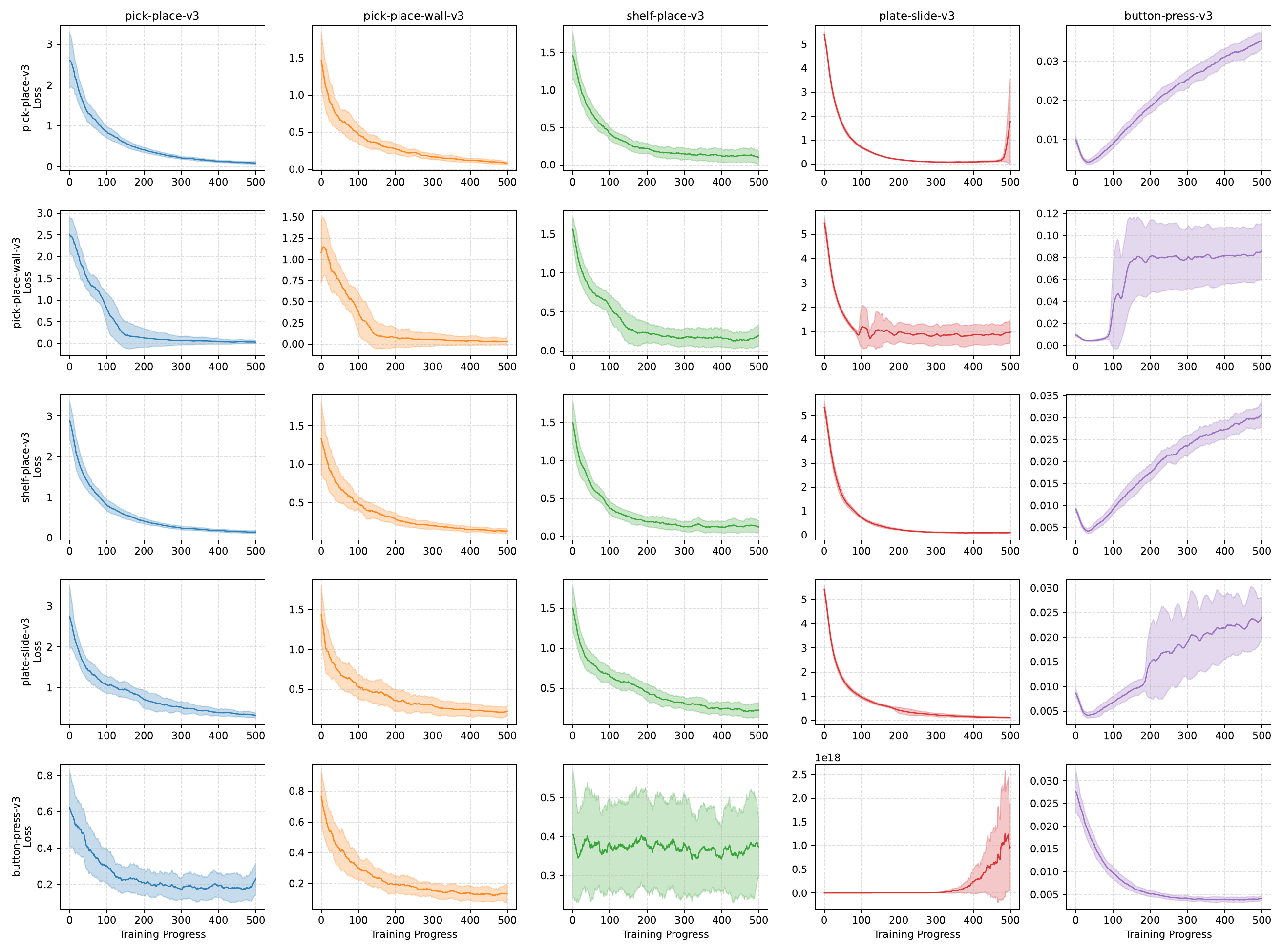}
    \caption{Models trained on a single domain are applied to all domains simultaneously.}
    \label{fig:one_for_all}
\end{figure}

\section{Discontinuous State Value Function and TD Error}
\label{app:discontinuous_state_value}
A key assumption of the model is that the underlying state-value function lies within the induced RKHS, under which the function can be accurately approximated via in-context learning. When this assumption is violated, the model’s performance degrades. Under the original MetaWorld reward settings, the induced state-value function is highly discontinuous, making it difficult for the TD error to converge to zero across all states.

To illustrate this point, we take a fully trained model on Pick-Place-v3 and visualize, along an entire episode, the TD error at each state against the corresponding discounted return. Figure~\ref{fig:value_discontinuity_TD_Error} illustrates the result. Note that we report the \itbf{square root} of the TD loss to place it on a scale comparable to the state value. After this rescaling, the TD error remains on the order of \itbf{10}, which is still substantially smaller than the state-value scale of approximately \itbf{100}. The TD error remains near zero during the initial time steps but increases as the time horizon grows. Overall, \btxt{this behavior stems from properties of the environment rather than a fundamental mistake of the model}. Simple modifications such as reward rescaling—which effectively rescales the value-function landscape—can help mitigate this issue. In light of this behavior, we report TD loss values over the initial time steps.
\begin{figure}[t]
    \centering
    \includegraphics[width=0.5\columnwidth]{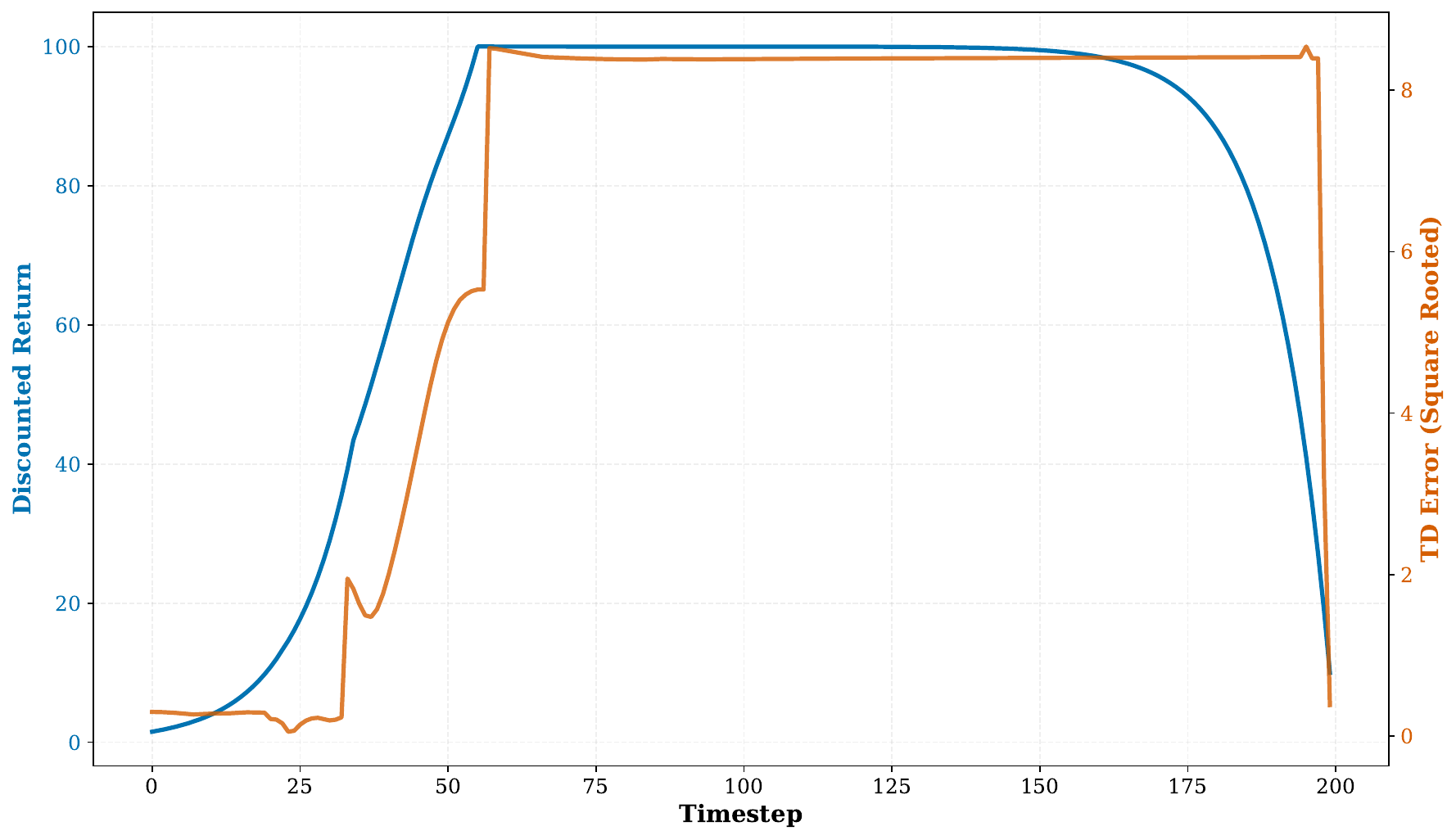}
    \caption{Illustration of the induced state-value function under the original MetaWorld reward settings. The value function exhibits large discontinuities, violating the smoothness assumptions implicit in the RKHS approximation.}
    \label{fig:value_discontinuity_TD_Error}
\end{figure}

\section{Details for Experiments}
\subsection{Environment Setting}
\label{app:noise_level}
To construct the MRPs from the MetaWorld environments, as discussed in Section~\ref{sec:experiments}, we inject Gaussian noise at varying levels into the actions generated by the optimal policies. For completeness, we report the corresponding noise standard deviations and the estimated initial state values for each domain. All state values are computed using a discount factor of $\gamma = 0.9$. Table~\ref{tab:mrp_noise} presents the results. Notably, the relationships among the initial state values $V(s_0)$ align closely with the trends observed in Figure~\ref{fig:TD_over_tasks}. For example, in Button-Press-v3, the $V(s_0)$ values are relatively close to one another, which leads to greater overlap among the corresponding curves in Figure~\ref{fig:TD_over_tasks}.

The values span a broader range, reflecting our choice of domains designed to evaluate the model under diverse settings.

\subsection{Hyperparameter Settings}
\label{app:hyperparameters}
We provide the hyperparameters used across all experiments in this work in table~\ref{tab:hyperparams}. While the domains share a set of common hyperparameters, each domain also includes domain-specific settings. In particular, as discussed in Section~\ref{sec:limitations}, the current construction requires different activation temperatures (i.e., kernel parameters) across domains. While our experiments show that a single set of kernel parameters can perform well across multiple domains, we nevertheless report common and domain-specific hyperparameters separately to emphasize that these parameters may still require adjustment. We note that these hyperparameters could be further optimized to yield stronger empirical performance.

\begin{table}[H]
    \centering
    \caption{Gaussian noise levels injected into optimal-policy actions and the corresponding estimated initial state values for each MetaWorld domain.}
    \label{tab:mrp_noise}
    \begin{tabular}{lcc}
        \toprule
        Domain & Noise Std. ($\sigma$) & Initial State Value $V(s_0)$ \\
        \midrule
        {Pick-Place-v3}
            & {0.0 (low)} & 1.52 \\
            & {0.3 (medium)} & 1.31 \\
            & {0.5 (high)} & 0.79 \\
        \midrule
        {Pick-Place-Wall-v3}
            & 0.0 & 0.91 \\
            & 0.2 & 0.54 \\
            & 0.4 & 0.16 \\
        \midrule
        {Shelf-Place-v3}
            & 0.0 & 2.06 \\
            & 0.4 & 1.98 \\
            & 0.8 & 1.06 \\
        \midrule
        {Plate-Slide-v3}
            & 0.0 & 10.33 \\
            & 0.5 & 9.89 \\
            & 0.9 & 9.12 \\
        \midrule
        {Button-Press-v3}
            & 0.0 & 3.01 \\
            & 0.6 & 2.94 \\
            & 0.9 & 2.87 \\
        \bottomrule
    \end{tabular}
\end{table}

\begin{table}[H]
\centering
\small
\setlength{\tabcolsep}{6pt}
\renewcommand{\arraystretch}{1.15}
\caption{Hyperparameters. We report (i) common settings shared across all domains and (ii) domain-specific overrides.}
\label{tab:hyperparams}
\begin{tabular}{ll}
\toprule
\multicolumn{2}{l}{\textbf{Common (all domains)}} \\
\midrule
Batch size & 32 \\
Optimizer & Adam \\
Weight decay & $1\times10^{-6}$ \\
Context length & 1000 \\
State vector dimensionality & 39 \\
Number of transformer layers & 30 \\
Activation function & Exponential \\
Discount factor $\gamma$ & 0.9 \\
\midrule
\multicolumn{2}{l}{\textbf{Pick-Place-v3}} \\
\midrule
Activation temperature & 10.0 \\
Learning rate & 0.01 \\
\midrule
\multicolumn{2}{l}{\textbf{Pick-Place-Wall-v3}} \\
\midrule
Activation temperature & 10.0 \\
Learning rate & 0.01 \\
\midrule
\multicolumn{2}{l}{\textbf{Shelf-Place-v3}} \\
\midrule
Activation temperature & 10.0 \\
Learning rate & 0.01 \\
\multicolumn{2}{l}{\textbf{Plate-Slide-v3}} \\
\midrule
Activation temperature & 10.0 \\
Learning rate & 0.01 \\
\multicolumn{2}{l}{\textbf{Button-Press-v3}} \\
\midrule
Activation temperature & 2.5 \\
Learning rate & 0.0001 \\
\bottomrule
\end{tabular}
\end{table}

\section{Synthetic Environment}
\label{app:synthetic_env}
\subsection{Additional Illustrations}
We provide additional illustrations to examine the empirical results on the synthetic environment from multiple perspectives. Figure~\ref{appfig:across_seeds} shows the model-approximated state value functions under different random seeds. For each seed, the context is sampled independently. Across these different contexts, the transformer consistently recovers the shape of the state value function with high accuracy.

We further conduct ablation studies on context length and model depth. Figure~\ref{appfig:across_context} varies the number of transitions provided to the transformer while fixing the number of layers at $32$, whereas Figure~\ref{appfig:across_layers} varies the number of transformer layers while fixing the number of context transitions at $32$. In both studies, we sweep over ${2,4,8,16,32}$. These results show how the quality of the value-function approximation changes as the amount of context and the number of forward-pass TD update steps increase, consistent with our expectation that sufficient context and model depth are necessary for the model to achieve satisfactory performance.

\begin{figure}[t]
    \centering
    \begin{subfigure}{1.0\columnwidth}
        \centering
        \includegraphics[width=\linewidth]{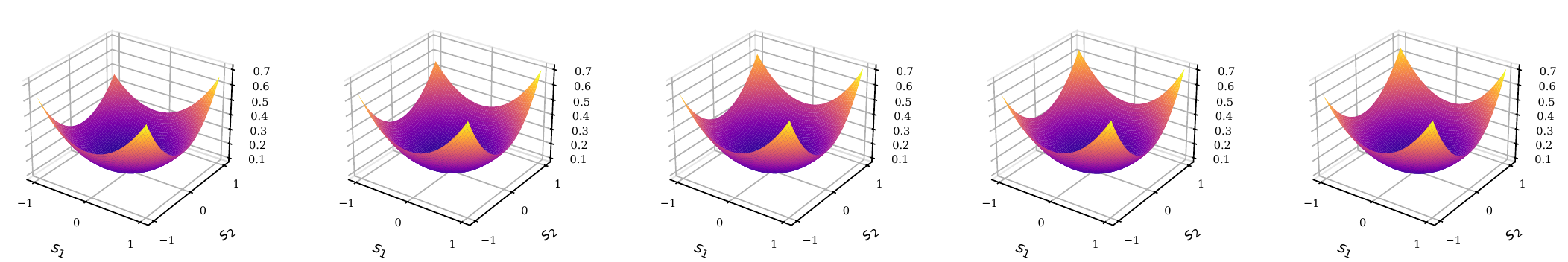}
        \caption{Model-approximated state value functions across random seeds.}
        \label{appfig:across_seeds}
    \end{subfigure}
    \hfill
    \begin{subfigure}{1.0\columnwidth}
        \centering
        \includegraphics[width=\linewidth]{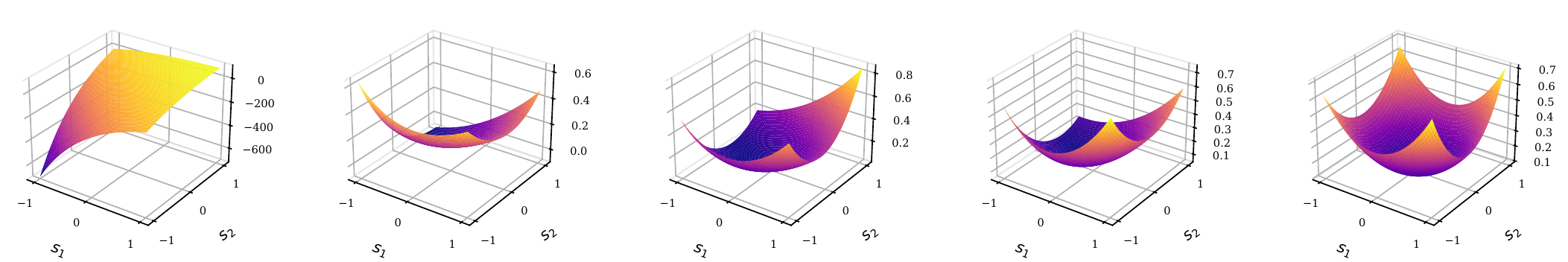}
        \caption{Model-approximated state value functions as the number of context transitions varies. From left to right, the number of context transitions is $2$, $4$, $8$, $16$, and $32$.}
        \label{appfig:across_context}
    \end{subfigure}
    \hfil
    \begin{subfigure}{1.0\columnwidth}
        \centering
        \includegraphics[width=\linewidth]{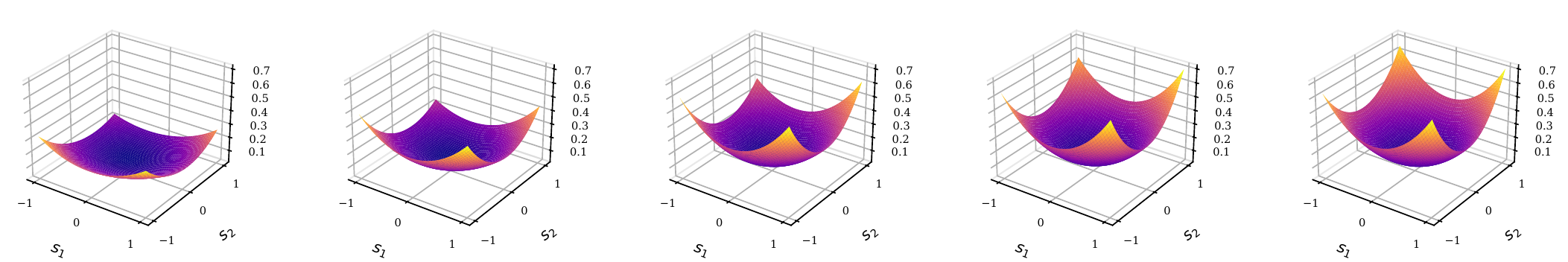}
        \caption{Model-approximated state value functions as the number of transformer layers varies. From left to right, the number of context transitions is $2$, $4$, $8$, $16$, and $32$.}
        \label{appfig:across_layers}
    \end{subfigure}
    \hfil
    \begin{subfigure}{0.2\columnwidth}
        \centering
        \includegraphics[width=\linewidth]{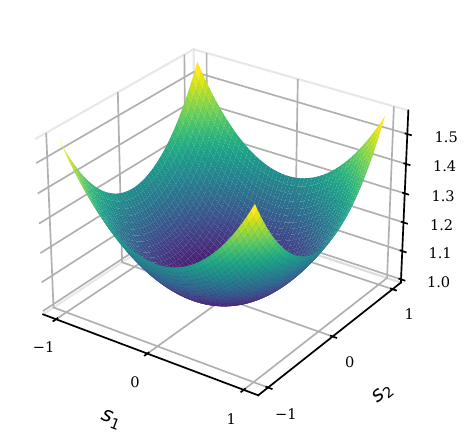}
        \caption{Ground-truth state value function.}
        \label{appfig:ground_truth}
    \end{subfigure}
    \caption{Comparison between the model-approximated and ground-truth state value functions.}
    \label{fig:app_syntheic_env}
\end{figure}

\subsection{Environment Details}
We provide details on the synthetic MRP used in our experiments. The purpose of this environment is to construct a controlled setting in which the ground-truth state value function lies in an RKHS and can therefore be theoretically approximated via kernel regression. If the transformer can recover the underlying state value function from environment transitions provided as context, this empirically supports our claim that the transformer implements kernel TD regression in its forward pass.

\paragraph{State space and transition dynamics.}
We define a Markov Reward Process (MRP) as a tuple
\[
    \mathcal{M} = (\mathcal{S}, P, r, \gamma),
\]
where the state space is the two-dimensional compact domain
\[
    \mathcal{S} = [-1,1]^2 \subset \mathbb{R}^2.
\]
Given a current state \(s_t \in \mathcal{S}\), the next state is generated according to the transition kernel \(P(\cdot \mid s_t)\) induced by the linear Gaussian dynamics
\begin{equation}
    s_{t+1} = \rho s_t + \epsilon_t,
    \qquad
    \epsilon_t \sim \mathcal{N}(0,\sigma^2 I_2),
\label{eq:synthetic_transition}
\end{equation}
where \(\rho \in (0,1)\). Equivalently,
\[
    P(\cdot \mid s)
    =
    \mathcal{N}\left(\rho s, \sigma^2 I_2\right).
\]
The contraction parameter \(\rho < 1\) encourages trajectories to remain concentrated around the origin, while the Gaussian noise introduces stochasticity into the state evolution. In our experiments, we set \(\rho=0.5\) and \(\sigma=0.2\), which produces smooth stochastic trajectories from any sampled initial states. We use a discount factor of \(\gamma=0.9\), consistent with the experimental setting used in MetaWorld.

Under these linear Gaussian dynamics, the Markov chain admits a stationary distribution
\[
    \mathcal{N}\left(
        0,
        \frac{\sigma^2}{1-\rho^2} I_2
    \right).
\]

The two-dimensional state-space construction makes it possible to visualize both the ground-truth value function and the transformer-approximated value function as three-dimensional surface plots.

\paragraph{Kernel and landmark construction.}
We use the same exponential kernel as the one induced by the nonlinear attention activation:
\begin{equation*}
    \kappa(x,y) = \exp\left(\frac{x^\top y}{\delta}\right),
\end{equation*}
where \(\delta>0\) denotes the kernel temperature. In our experiments, we set \(\delta=1\).

To construct the ground-truth value function, we place \(m\) centroid points uniformly on the unit circle:
\begin{equation*}
    c_j =
    \left(
    \cos\left(\frac{2\pi j}{m}\right),
    \sin\left(\frac{2\pi j}{m}\right)
    \right),
    \qquad j=0,\ldots,m-1.
\end{equation*}
In our experiments, we set \(m=8\). This symmetric configuration induces a smooth value landscape, making it well suited for visualization. Figure~\ref{appfig:landmarks} visualizes the centroid points, which are evenly distributed along the unit circle.
\begin{figure}[t]
    \centering
    \includegraphics[width=0.45\columnwidth]{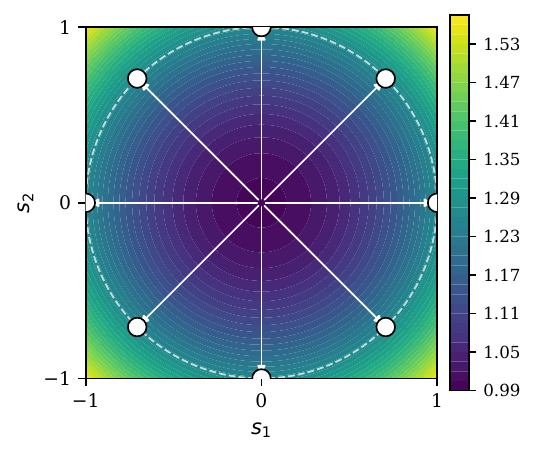}
    \caption{Selected centroid points and the induced state value function.}
    \label{appfig:landmarks}
\end{figure}

\paragraph{Ground-truth value function.}
We define the ground-truth state value function as a uniform mixture of kernel functions centered at the landmark points. Specifically, for every \(s \in [-1,1]^2\),
\begin{equation*}
    V^\star(s)
    =
    \frac{1}{m}
    \sum_{j=0}^{m-1}
    \kappa(c_j,s)
    =
    \frac{1}{m}
    \sum_{j=0}^{m-1}
    \exp\left(\frac{c_j^\top s}{\delta}\right).
    \label{eq:synthetic_vstar}
\end{equation*}
Intuitively, \(V^\star(s)\) measures the aggregate kernel similarity between the state \(s\) and the centroid points. Since the centroids are uniformly distributed along the unit circle, states with larger norms tend to align more strongly with a subset of the centroids. Consequently, the value function increases as the state moves away from the origin, with the largest values attained near the corners of the state space. Figure~\ref{appfig:landmarks} visualizes this effect.

By construction, \(V^\star \in \mathcal{H}_\kappa\), since it is a finite linear
combination of kernel evaluations.

\paragraph{Bellman-consistent reward.}
We derive the reward function from the
desired value function so that the Bellman equation is satisfied exactly. For a
single kernel term, using the Gaussian transition dynamics gives
\begin{align*}
    \mathbb{E}\left[\kappa(c_j,s_{t+1}) \mid s_t=s\right]
    &=
    \mathbb{E}\left[
    \exp\left(\frac{c_j^\top(\rho s+\epsilon_t)}{\delta}\right)
    \right] \\
    &=
    \exp\left(\frac{(\rho c_j)^\top s}{\delta}\right)
    \exp\left(\frac{\|c_j\|^2\sigma^2}{2\delta^2}\right).
\end{align*}
Since the centroids lie on the unit circle, \(\|c_j\|=1\), and therefore the
Gaussian correction term is the same for all landmarks. Denote
\begin{equation*}
    \eta = \exp\left(\frac{\sigma^2}{2\delta^2}\right).
\end{equation*}
Then
\begin{equation*}
    \mathbb{E}\left[V^\star(s_{t+1}) \mid s_t=s\right]
    =
    \frac{\eta}{m}
    \sum_{j=0}^{m-1}
    \kappa(\rho c_j,s).
\end{equation*}
We define the reward function by rearranging the Bellman equation
\(V^\star(s)=R(s)+\gamma\mathbb{E}[V^\star(s_{t+1})\mid s_t=s]\):
\begin{equation}
    R(s)
    =
    \frac{1}{m}
    \sum_{j=0}^{m-1}
    \left[
    \kappa(c_j,s)
    -
    \gamma \eta \kappa(\rho c_j,s)
    \right].
    \label{eq:synthetic_reward}
\end{equation}
Substituting into the Bellman equation gives
\begin{align*}
    R(s) + \gamma \mathbb{E}\left[V^\star(s_{t+1}) \mid s_t=s\right]
    &=
    \frac{1}{m}
    \sum_{j=0}^{m-1}
    \left[
    \kappa(c_j,s)
    -
    \gamma \eta \kappa(\rho c_j,s)
    \right]
    +
    \frac{\gamma \eta}{m}
    \sum_{j=0}^{m-1}
    \kappa(\rho c_j,s) \\
    &=
    \frac{1}{m}
    \sum_{j=0}^{m-1}
    \kappa(c_j,s)
    =
    V^\star(s).
\end{align*}
Thus, the constructed reward function can be exactly calculated for each state and is Bellman-consistent with
\(V^\star\). Moreover, \(R\in\mathcal{H}_\kappa\), since it is also a finite
linear combination of kernel functions.

\paragraph{Transformer Input.}
For each prompt, we uniformly sample a set of context states
\((s_0,s_1,\ldots,s_{n-1})\)
from the synthetic state space. These states are then used to generate the corresponding next states
\((s_0',s_1',\ldots,s_{n-1}')\)
and rewards
\((r_0,r_1,\ldots,r_{n-1})\)
according to the transition dynamics in Equation~\ref{eq:synthetic_transition} and the reward function in Equation~\ref{eq:synthetic_reward}. An arbitrary query state is then appended to the context to form \(Z_0\), which is provided as input to the transformer for value prediction. The transformer performs kernel TD updates through its forward pass, using the context states as kernel centers.

Importantly, the input provided to the transformer consists only of individual transition samples. The context states are uniformly distributed across the state space and are not chosen to match the centroid configuration used to define \(V^\star\). Thus, recovering the underlying state value function requires the transformer to correctly implement the TD update procedure from the sampled transitions.

In the experiment shown in Figure~\ref{fig:syntheic_env}, we uniformly sample
\(32\) context states from the state space together with their corresponding next
states and rewards. We then fix the context and vary the query state over a grid
in \([-1,1]^2\), allowing us to visualize the transformer-predicted value
surface. We use \(30\) transformer layers, consistent with the MetaWorld
experiments. As shown in the results, the transformer accurately recovers the underlying state value function, up to the constant bias discussed above.

\end{document}